\documentclass[twoside,11pt]{article}

\usepackage[T1]{fontenc}
\usepackage[utf8]{inputenc}
\usepackage{amsmath}
\usepackage{jmlr2e}

\usepackage{booktabs}
\usepackage{array}
\usepackage{longtable}   
\usepackage{microtype}
\usepackage{xcolor}
\usepackage{tikz}
\usetikzlibrary{arrows.meta,positioning,fit,backgrounds,shapes.geometric,calc,patterns,patterns.meta}
\usepackage{pgfplots}
\pgfplotsset{compat=1.16}

\definecolor{idblue}{RGB}{31,119,180}
\definecolor{oodorange}{RGB}{214,96,42}
\definecolor{leakred}{RGB}{200,30,45}
\definecolor{okgreen}{RGB}{34,139,76}
\definecolor{panelgray}{RGB}{120,120,120}
\definecolor{oracleblue}{RGB}{29,82,160}
\definecolor{unsupred}{RGB}{220,90,30}
\definecolor{gapgray}{RGB}{170,170,170}
\definecolor{leakpurple}{RGB}{140,60,160}

\newtheorem{observation}[theorem]{Observation}
\renewenvironment{proof}[1][Proof]{\par\noindent{\bf #1\ }}{\hfill\BlackBox\\[2mm]}

\newcommand{\prs}{\textsc{PRS}}
\newcommand{\ahss}{\textsc{AHSS}}

\newcommand{\auroc}{AUROC}
\newcommand{\oraclelda}{OracleLDA}
\newcommand{\R}{\mathbb{R}}
\newcommand{\E}{\mathbb{E}}
\newcommand{\norm}[1]{\left\lVert#1\right\rVert}
\newcommand{\Sig}{\boldsymbol{\Sigma}}

\usepackage{lastpage}
\jmlrheading{}{2026}{1-\pageref{LastPage}}{6/26}{}{}{Vishnu Bindu Balachandran}
\ShortHeadings{Decodable but Not Detectable}{Vishnu Bindu Balachandran}
\firstpageno{1}

\begin{document}

\title{Decodable but Not Detectable:\\
A Leakage Fingerprint for Near-OOD Benchmarks}

\author{\name Vishnu Bindu Balachandran \email vishnubindubalachandran@outlook.com \\
       \addr Independent AI Researcher\\
       \url{www.vishnubindubalachandran.com}}

\editor{}

\maketitle

\begin{abstract}%
While auditing a perturbation-based out-of-distribution (OOD) detector on a
standard document benchmark, we recorded an \auroc{} of $0.326$, well under the
$0.5$ chance level; taken at face value this reads as a catastrophic method
failure, but it is instead a benchmark leak. The near-OOD split was built by
designating as ``OOD'' a class the model was trained on, an easy mistake that
leaves its training examples inside the in-distribution set used to fit the
unsupervised detector, so the detector is asked to flag a familiar,
in-distribution population as anomalous. Deleting the class and re-training $35$
models across two domains raises the same number to $0.911$, and the
below-chance behavior disappears for every unsupervised detector we tried. We
distill the contamination into a leak fingerprint, near-perfect
supervised decodability ($\auroc\!\approx\!1$) coupled with unsupervised
detection that collapses well below its clean and far-OOD level
(operationally, best-unsupervised $\auroc<0.65$; far below chance in the
original document case), and show it separates a leaked class from a genuinely hard
novel class, which is only imperfectly decodable. We
validate this fingerprint on a controlled battery of $20$ deliberately-leaked
settings and $32$ clean controls ($20$ matched, $12$ novel-source) across the full $2\times2$ of ResNet-50 and ViT-B/16
on CIFAR-10/100 (four fine-tuned backbones). In the model's
penultimate-embedding space it attains sensitivity $18/20$ and specificity $31/32$;
the decisive signal is a matched contrast: the same class trips the fingerprint
when leaked into the fit set and not when excluded from it (perfect $20/20$ on the matched
controls). The firing direction also transfers to the deployed perturbation signature, while
this cheap no-retrain contrast is reliable in the embedding space. Finally we run the audit
the test enables: across $24$ standard near/far OOD benchmark pairs on four backbones
the fingerprint fires on exactly one (the intrinsically-hard CIFAR-100$\to$CIFAR-10
near pair our scope analysis predicts) and on no far-OOD pair, confirming the
diagnostic is specific in the wild and that standard cross-dataset construction does not
introduce the fit-set leak. We then release a drop-in leave-class-out correction and
re-evaluate under it. The
re-evaluation surfaces a second, broader finding that the leak had obscured:
across $19$ clean settings, three architectures, and seven datasets, the
perturbation signals are decodable but not detectable, a supervised reader
recovers the OOD signal nearly perfectly while no unsupervised detector does, and
the perturbation-based detector does not meaningfully improve on a plain
Mahalanobis-on-embeddings baseline. For transparency we also retract, in-paper, an earlier headline
correlation we found to be an artifact of a circular measurement. The
contributions are a corrected protocol and a validated detector for the
contamination, not a new OOD method.
\end{abstract}

\begin{keywords}
  out-of-distribution detection, near-OOD evaluation, benchmark contamination,
  data leakage, model auditing, perturbation-response detection
\end{keywords}

\section{Introduction}
\label{sec:intro}
Near-OOD detection, flagging inputs from classes close to, but not in, the
training distribution, is among the most consequential settings for OOD methods,
and among the hardest to evaluate. Evaluations typically designate one or more
held-out classes as ``OOD'' and measure how well an unsupervised detector,
fit on in-distribution (ID) data only, separates them. The validity of such
an evaluation rests on a quiet assumption: that the ``OOD'' class is genuinely
absent from both training and the ID fit set. We show this assumption is easy to
violate when constructing a near-OOD split, with consequences severe enough
to invert the sign of reported results, and we give a way to detect the failure.

We found the contamination by accident. Auditing a perturbation-response OOD
detector (\prs, Section~\ref{sec:vehicle}) on a standard document benchmark, we measured
an \auroc{} of $0.326$, far below the $0.5$ of random guessing. A below-chance
detector is a strong signal that something is structurally wrong, because it means
the ID and ``OOD'' populations are systematically reversed in the score. The
cause was a leak in how this near-OOD split was constructed: the class
designated ``OOD'' is one the model was trained on, and its training documents are
inside the one-class fit set. The detector, which scores points by distance from
the ID bulk, correctly judges this trained class to be in-distribution, and
is penalized for it. Deleting the class from training and re-fitting raises the same
number to $0.911$ (Section~\ref{sec:leak}).

This paper makes three contributions.
\begin{itemize}
  \item \textbf{A leak in near-OOD benchmark construction} (Section~\ref{sec:leak}). We
  show that designating a semantically similar trained class as ``OOD'', a
  tempting way to build a hard near-OOD split, places that class inside the ID fit
  set, manufacturing spurious below-chance ``failures'' ($0.326$) that vanish under
  a corrected leave-class-out protocol ($0.911$ on the same class),
  validated with $35$ fresh fine-tunes across two domains (documents and text).
  \item \textbf{A test for the leak, validated} (Section~\ref{sec:leak},
  Remark~\ref{rem:fingerprint}; Section~\ref{sec:validate}). We derive a leak fingerprint (near-perfect supervised decodability with unsupervised detection
  collapsed well below its clean level, best-unsup $<0.65$) that flags a
  contaminated near-OOD split without backbone
  re-training, and we validate it on a controlled battery: across $20$ deliberately-leaked settings and $32$ clean controls ($20$ matched, $12$ novel-source; ResNet-50 and ViT-B/16;
  CIFAR-10/100; four fine-tuned backbones, the full $2\times2$) it attains, in the
  model's penultimate-embedding space, sensitivity $18/20$ and specificity $31/32$
  (matched fit-set-exclusion controls perfect at $20/20$), isolating fit-set membership as the
  cause and separating a leak from a genuinely hard novel class. We then run the
  in-the-wild audit the test enables across $24$ standard near/far OOD benchmark pairs
  (Section~\ref{sec:audit}): it fires on exactly the one intrinsically-hard pair our scope
  predicts and on no far-OOD pair, so standard cross-dataset construction does not introduce
  the leak and the diagnostic is specific in the wild.
  \item \textbf{A corrected protocol and a clean re-evaluation} (Section~\ref{sec:gap}).
  Under the correction, the perturbation signals are decodable but not
  detectable: across $19$ clean settings, three architectures, and seven datasets,
  a supervised reader recovers the OOD signal ($\auroc\,0.87$--$1.00$) while no
  unsupervised detector closes the gap, and the perturbation detector does not
  meaningfully improve on plain embedding distance. This gap is consistent with,
  and measured independently of, concurrent analyses of OOD misspecification
  (Section~\ref{sec:related}); we give a short mechanism (Section~\ref{sec:theory}) for why.
\end{itemize}

We keep the perturbation detector (\prs) in the paper only as the vehicle
that exposed the leak and as the object of the clean re-evaluation; it is not
proposed as a method, and we show it is no better than a plain baseline and is
often beaten by an off-the-shelf Isolation Forest.

\section{Related Work}
\label{sec:related}
The leak and the decodability gap we report each connect to a distinct thread of prior work, from post-hoc OOD scoring to documented benchmark contamination and the broader literature on data leakage.

\subsection{Post-Hoc OOD Detection}
A large family of methods scores OOD-ness from a frozen classifier's outputs or
features. Output-based scores include maximum softmax probability \citep{msp},
temperature and input scaling \citep{odin}, free energy \citep{energy}, and
activation rectification \citep{react}. Distance- and density-based scores
instead operate in feature space: class-conditional Gaussian (Mahalanobis)
scoring \citep{maha} and non-parametric $k$-nearest-neighbor distance \citep{knn}.
Our study uses these last two as the baselines the perturbation detector must
beat, and finds that it does not.

\subsection{Near-OOD Detection and Its Evaluation}
Near-OOD, where the held-out classes are semantically close to the training
distribution, is the hardest and most safety-relevant regime, and the one our
leak concerns. \citet{ren} show that plain Mahalanobis degrades on
near-OOD and propose a relative correction, and OpenOOD \citep{openood}
standardizes near- and far-OOD benchmarks for comparison. We contribute a
different observation, about evaluation construction rather than the
detector: a near-OOD split built by designating a trained class as ``OOD'' places
that class inside the in-distribution fit set, so a reported near-OOD score can
reflect contamination rather than detector quality. The attention-masking method
we audit \citep{base} in fact constructs its intra-dataset OOD correctly, holding
the designated class out of training; the leak we study arises precisely when that
discipline is not followed.

\subsection{Contamination and Ill-Posedness in OOD/Anomaly Benchmarks}
Our finding relates directly to two lines of work. At ImageNet scale, \citet{ninco}
show that widely-used OOD test sets are contaminated with
in-distribution content (some contain more than half images whose objects
belong to ID classes), and release the hand-cleaned NINCO benchmark to repair the
resulting distortion of detector rankings. Their contamination sits in the
OOD test images (ID objects mislabeled as OOD, which inflates false
positives); ours sits in the fit set (a trained ID class designated
``OOD'' while its training data stays inside the one-class fit), which inverts
the score rather than merely inflating it. Closer still, and concurrent, \citet{testingthetest}
show that class-split anomaly protocols become
ill-posed when the held-out class overlaps the normal mixture in
representation space, causing scores to collapse or invert, and propose a
training-free ``neighborhood class leakage'' diagnostic validated on Fashion-MNIST,
CIFAR-10, and Imagenette. We reach the same core phenomenon independently (leakage
makes a held-out class score below an honest baseline, detectable without
re-training) but from a different starting point (a real, deployed
multimodal-document detector, not a constructed toy split) and with a
different diagnostic: where their index measures unsupervised
representation overlap, our fingerprint contrasts near-perfect supervised
decodability ($\oraclelda\!\approx\!1$) against unsupervised collapse, the ``decodable but not detectable'' signature (Section~\ref{sec:gap}). In addition, it separates a genuine leak from an intrinsically hard but clean near-OOD
pair (Section~\ref{sec:validate}). We read their result as strong independent
corroboration that benchmark-construction leakage is a real and general hazard; our
complementary contributions are the supervised-decodability fingerprint, its
discovery in a deployed benchmark, the corrected leave-class-out protocol, and the
controlled sensitivity/specificity validation. We make the complementarity
empirical: a faithful reimplementation of their k-NN class-leakage index, run on our
$52$-setting battery, separates leak from clean at AUROC $0.52$ (chance), with near-identical mean leakage on leaked ($0.087$) and clean ($0.094$) settings, whereas our
fingerprint reaches AUROC $0.995$ (Appendix~\ref{app:headtohead}). The reason is
mechanistic and is the crux of our delta: their index detects geometric overlap (a
held-out class mixed into the ID neighborhood), but our leaked classes are separable
clusters that merely sit inside the one-class fit set (low neighborhood leakage), so
an overlap index is blind to them. The two diagnose distinct, complementary contamination
mechanisms; a benchmark auditor needs both.

\subsection{Is Supervised OOD Detection Misspecified?}
A concurrent position paper \citep{wrongquestions} argues that supervised OOD
detectors are fundamentally misspecified: a model trained only on ID
classes answers ``are these features atypical?'' rather than ``is this point from a
different distribution?'', leaving an irreducible gap between any ID-only
detector and an oracle with OOD access. Our contribution is complementary and
operates at a different level. They diagnose a conceptual limitation of the
detection task on clean benchmarks; we diagnose a construction flaw
in a specific benchmark, a trained class placed in the ``OOD'' set, that is not
irreducible but removable by protocol (\auroc{} $0.326\!\to\!0.911$). The two
findings reinforce each other: their oracle-versus-method gap is an instance of the
same oracle-versus-unsupervised gap we measure on a different suite, and their result
that degrading a feature covariance toward the identity improves OOD
detection independently corroborates the whitening mechanism we derive in
Section~\ref{sec:theory}. Where they argue the question is wrong, we show that one
widely-used way of asking it was also contaminated.

\subsection{Perturbation- and Sensitivity-Based Detection}
A separate line reads OOD-ness from how a model responds to perturbations.
Attention-head masking \citep{base} summarizes the representation shift under random
head ablations into a scalar sensitivity score. The detector we audit (\prs) is a
vector generalization of that scalar; our negative result therefore bears directly
on the perturbation-sensitivity premise, and our mechanism (Section~\ref{sec:theory})
explains why the richer representation still does not help. The baselines and
fixed-seed protocol we inherit here were established in a prior cross-architecture
audit of attention-head-masking detection \citep{audit}, which first reported the
near-OOD degradation this paper traces to a benchmark-construction leak; the
present work isolates and corrects that leak and adds the clean re-evaluation.

\subsection{Data Leakage in ML Benchmarks}
Train/test contamination is a recognized, recurring failure mode: \citet{kaufman}
give a foundational taxonomy of leakage, and \citet{leakage}
document it across $17$ scientific fields and hundreds of papers, frequently
producing wildly optimistic conclusions. Our finding is a concrete,
mechanism-level instance in OOD evaluation, a trained class used as
``OOD'' while its examples sit inside the one-class fit set, accompanied by a
retraining-free test to detect it, complementing the checklist-style prevention
those works advocate.

\subsection{Covariance Estimation}
Our detector uses Ledoit--Wolf shrinkage \citep{ledoitwolf} for the signature
covariance; the whitening pathology we prove and measure (Section~\ref{sec:theory})
characterizes what shrinkage can and cannot repair when the discriminative signal
concentrates in high-variance directions.

\section{The Vehicle: A Perturbation-Response Detector}
\label{sec:vehicle}
We summarize only what is needed to read the results; the detector is not a
contribution. Let $f$ be a frozen fine-tuned transformer with \textsc{cls}
embedding $\phi:\mathcal X\!\to\!\R^{m}$. For a fixed bank of $K{=}25$ binary
attention-head masks $\{M_1,\dots,M_K\}$ plus the clean pass $\mathbf 1$, we run $f$
$K{+}1$ times and form a $77$-dimensional perturbation-response signature
$s(x)$ from four blocks: per-mask displacement
$d_k=\norm{\hat\phi(x;M_k)-\hat\phi(x;\mathbf 1)}^2$, norm ratios
$r_k=\norm{\phi(x;M_k)}/\norm{\phi(x;\mathbf 1)}$, distances $g_0,g_k$ to ID class
centroids, and a nearest-centroid flip rate ($\hat\phi$ denotes $L_2$
normalization). The base method's scalar is the sign-committed collapse
$\ahss(x)=-\tfrac1K\sum_k d_k(x)$\citep{base}.

The primary detector \prs-Maha fits a one-class Gaussian with
Ledoit--Wolf shrinkage on standardized ID-train signatures only
(here $D{=}77$ is the signature dimension),
\begin{equation}
s_{\mathrm{Maha}}(z) = -\,(z-\mu)^\top \Sig^{-1}(z-\mu), \qquad
\Sig=(1-\alpha)\hat\Sig+\alpha\tfrac{\mathrm{tr}\hat\Sig}{D}I,
\label{eq:maha}
\end{equation}
and scores larger $=$ more in-distribution, with the shrinkage intensity
$\alpha\!\in\![0,1]$ set by the Ledoit--Wolf estimator. Baselines Mahalanobis,
$k$NN, and \ahss run on the raw embedding. \oraclelda{} is a
diagnostic, not a detector: the best label-supervised linear (Fisher) reader
of $s(x)$, which measures whether the OOD signal is present at all. No statistic is
ever fit on OOD data (enforced by a unit test), and every inherited baseline is
reconciled to the prior audit it comes from \citep{audit} to $\Delta\!\le\!10^{-4}$, so any
anomaly is a property of the data, not an implementation bug.

\section{A Benchmark Leak in Near-OOD Evaluation}
\label{sec:leak}

\begin{figure}[htbp]
\centering
\resizebox{\textwidth}{!}{%
\begin{tikzpicture}[font=\sffamily\small]
\begin{scope}[local bounding box=L]
  \node[font=\sffamily\bfseries, leakred] at (2.5,4.75) {Leaked construction};
  \node[font=\sffamily\itshape\footnotesize, panelgray] at (2.5,4.32) {``OOD'' class stays in training};
  \draw[idblue, very thick, fill=idblue!8] (2.5,1.8) ellipse (2.6 and 1.85);
  \node[idblue, font=\sffamily\bfseries\footnotesize, anchor=north west] at (0.55,3.15) {ID fit set};
  \foreach \x/\y in {1.2/1.5, 1.6/2.1, 1.0/0.95, 1.9/1.35, 1.4/0.85, 2.0/1.95, 0.85/1.85}
     \fill[idblue] (\x,\y) circle (1.7pt);
  \fill[leakred!12] (3.45,1.55) ellipse (0.92 and 0.78);
  \draw[leakred, very thick, dashed] (3.45,1.55) ellipse (0.92 and 0.78);
  \foreach \x/\y in {3.25/1.55, 3.6/1.72, 3.5/1.35, 3.05/1.45, 3.8/1.55} \fill[leakred] (\x,\y) circle (1.7pt);
  \node[leakred, font=\sffamily\bfseries\footnotesize, align=center] at (3.45,-0.55) {``OOD'' $=$ Memo\\(a trained class)};
  \node[draw=leakred, rounded corners, fill=white, text=leakred, font=\sffamily\bfseries\footnotesize, align=center, inner sep=4.5pt] at (2.5,-1.35)
        {detector ranks it inside ID\\$\Rightarrow$ AUROC $= 0.326$ (below chance)};
\end{scope}
\begin{scope}[xshift=8.6cm, local bounding box=R]
  \node[font=\sffamily\bfseries, okgreen] at (2.5,4.75) {Corrected (leave-class-out)};
  \node[font=\sffamily\itshape\footnotesize, panelgray] at (2.5,4.32) {class deleted from training, model re-fit};
  \draw[idblue, very thick, fill=idblue!8] (2.5,1.8) ellipse (2.6 and 1.85);
  \node[idblue, font=\sffamily\bfseries\footnotesize, anchor=north west] at (0.55,3.15) {ID fit set};
  \foreach \x/\y in {1.2/1.5, 1.6/2.1, 1.0/0.95, 1.9/1.35, 1.4/0.85, 2.0/1.95, 0.85/1.85, 2.5/1.2}
     \fill[idblue] (\x,\y) circle (1.7pt);
  \draw[oodorange, very thick, fill=oodorange!12] (4.45,2.75) ellipse (0.9 and 0.72);
  \foreach \x/\y in {4.15/2.55, 4.55/2.95, 4.75/2.65, 4.30/2.85, 4.95/2.80, 4.50/2.45} \fill[oodorange] (\x,\y) circle (1.7pt);
  \node[oodorange, font=\sffamily\bfseries\footnotesize, align=center, anchor=north west] at (5.15,0.55) {OOD $=$ held-out\\(unseen) class};
  \node[draw=okgreen, rounded corners, fill=white, text=okgreen, font=\sffamily\bfseries\footnotesize, align=center, inner sep=4.5pt] at (2.5,-1.35)
        {detector ranks it outside ID\\$\Rightarrow$ AUROC $= 0.911$ (normal)};
\end{scope}
\draw[-{Latex[length=3mm]}, line width=1.1pt, panelgray]
      (5.45,1.3) -- node[above, font=\sffamily\bfseries\footnotesize, panelgray, align=center] {delete class\\from training} ++(2.5,0);
\end{tikzpicture}%
}
\caption{Why the leak inverts the score. \textbf{Left:} when a semantically similar
trained class (Memo) is designated ``OOD'', its examples lie inside
the one-class ID fit set, so an unsupervised detector correctly ranks them as
in-distribution and is penalized, scoring below chance ($0.326$). \textbf{Right:}
removing the class from training (leave-class-out) places a genuinely unseen class
at the edge of the fit set (near-OOD: semantically close but not trained),
and the same detector scores normally ($0.911$). Only
the unsupervised score is affected by the leak; a supervised reader, which
sees labels, is not (it separates the trained class perfectly, AUROC $\approx 1.0$,
and a genuinely novel class only imperfectly, $\approx 0.88$, a contrast we use as
a leak fingerprint in Section~\ref{sec:leak}).}
\label{fig:leak}
\end{figure}

\subsection{The Symptom}
On a near-OOD split of the Tobacco document benchmark, in-distribution classes
vs.\ a semantically similar class designated ``OOD'', \prs-Maha scores
\auroc{}~$0.326$, and the embedding baselines are also depressed (Mahalanobis
$0.555$, $k$NN $0.634$, both far below their clean level and \prs-Maha actually
below chance). Yet the supervised diagnostic \oraclelda{} reads a
perfect $1.000$ on all five seeds (Table~\ref{tab:leak}; per-seed values in
Appendix~\ref{app:perseed}). A perfectly
decodable signal that every unsupervised detector reads at or below chance (one of them inverted) is contradictory unless the ID and ``OOD'' populations are
reversed.

\subsection{The Cause}
They are. The class designated ``OOD'' (Memo) is one of the classes the model was
trained on, and Memo's training documents are inside the one-class
fit set. A one-class detector scores points by distance from the ID bulk; a
trained class sits squarely inside that bulk, so the detector ranks it as
more in-distribution than the held-out ID evaluation data, exactly the
observed reversal (Figure~\ref{fig:leak}). This is a property of how this near-OOD split was constructed,
not of the underlying method: the attention-masking work we build on
\citep{base} constructs its intra-dataset OOD correctly, holding the designated
class (Advertisement) out of training. The leak appears only when a trained class
is repurposed as ``OOD'' without removing it from the fit set, an easy mistake
when one wants a hard, semantically close near-OOD pair. Our baselines
reconcile to the prior audit \citep{audit} to $10^{-4}$, so the effect is real, not an
implementation bug.

\subsection{The Correction}
We remove the contamination two ways. (a) Zero-retraining: substitute a
class never present in training (Advertisement, the very class the original method
paper \citep{base} holds out); with the same model and ID fit set and only the
designated ``OOD'' class swapped to this genuinely held-out one, \prs-Maha rises
from $0.326$ to $0.936$. (b) Leave-class-out (gold standard):
re-fine-tune $20$ LayoutLMv3 models ($4$ holdout classes $\times\,5$ seeds), each
with the holdout class deleted from training (eval-acc $0.93$--$0.98$), so the
``OOD'' class is genuinely unseen. Table~\ref{tab:leak} reports the corrected
result; the catastrophic inversion is gone on every holdout, and the same
Memo class that scored $0.326$ now scores $0.911$. We replicate the entire protocol
in a second domain ($15$ further RoBERTa fine-tunes on 20\,Newsgroups;
Table~\ref{tab:summary}, Section~\ref{sec:gap}), confirming the leak and its correction
are not specific to documents.

\begin{table}[htbp]

\centering\small
\begin{tabular}{@{}lcccc@{}}
\toprule
Holdout (Memo) & \prs-Maha & Mahalanobis & $k$NN & \oraclelda{} \\
\midrule
Leaked construction (Memo trained, in fit set)  & \textbf{0.326} & 0.555 & 0.634 & 1.000 \\
\textbf{Corrected} (Memo deleted, model re-fit)        & \textbf{0.911} & 0.937 & 0.924 & 0.985 \\
\bottomrule
\end{tabular}
\\[4pt]
\begin{tabular}{@{}lcccc@{}}
\toprule
Other leave-class-out holdouts & \prs-Maha & Mahalanobis & $k$NN & \oraclelda{} \\
\midrule
Resume     & 0.888 & 0.880 & 0.883 & 0.992 \\
Form       & 0.891 & 0.914 & 0.902 & 0.967 \\
Scientific & 0.818 & 0.862 & 0.849 & 0.959 \\
\bottomrule
\end{tabular}
\caption{The leak and its correction (5-seed mean \auroc). \textbf{Top:} the same
Memo class shifts from below chance ($0.326$) to a normal score ($0.911$)
once it is removed from training and the fit set, while \oraclelda{} stays high in
both rows, the diagnostic contrast formalized in Remark~\ref{rem:fingerprint}.
\textbf{Bottom:} three additional leave-class-out holdouts behave normally under the
corrected construction (none is a trained class). \prs{} still does not beat the
raw-embedding baseline, a separate point taken up in Section~\ref{sec:gap}.}
\label{tab:leak}
\end{table}

\subsection{A Test for the Leak}
The contradiction that exposed the leak is itself a usable diagnostic, and it
follows from the structure of one-class detection. The setting is a post-hoc
audit: given a benchmark whose OOD split is already designated, we ask
whether that split is contaminated, using the designation it ships with and a
lightweight LDA on cached features, with no re-training of the backbone. A trained
class is, by construction, perfectly linearly separable from the rest of ID (so a
supervised reader achieves $\oraclelda\!\to\!1$), yet it lies inside the ID fit set
(so an ID-only detector ranks it as in-distribution and scores it well below an
honest baseline, at or below chance in the strongest cases). A genuinely novel near-OOD class
need not be perfectly separable. The two regimes are therefore distinguishable
without any backbone re-training: the leaked Tobacco setting is the only one of our
$20$ settings (the $19$ clean settings plus the leaked one) with
$\oraclelda{}=1.000$ on all five seeds and
best-unsupervised \auroc{}~$<0.65$, whereas a genuinely hard novel class
(\texttt{comp.sys.mac.hardware} in our second domain, Table~\ref{tab:summary}) has
every unsupervised detector near chance ($0.51$--$0.60$) but \oraclelda{} only
$0.882$, not perfect.

\begin{remark}[Leak fingerprint]
\label{rem:fingerprint}
Near-perfect supervised decodability ($\oraclelda\!\approx\!1$) together with
unsupervised detection that collapses well below its clean and far-OOD level
(operationally, best-unsupervised $\auroc<0.65$, and below chance in the strongest
cases) is a symptom of a trained class leaking into the in-distribution fit set, and
can flag a suspect near-OOD split before any backbone re-training. Section~\ref{sec:validate} validates this diagnostic on a
controlled battery (in embedding space: sensitivity $18/20$, specificity $31/32$ over
four fine-tuned backbones), and Section~\ref{sec:audit} runs the audit it enables across
$24$ standard published OOD benchmark pairs.
\end{remark}

\section{A Controlled Validation of the Fingerprint}
\label{sec:validate}
Remark~\ref{rem:fingerprint} distilled the fingerprint from a single real leak. We now
test it as a sensitive and specific diagnostic on a controlled battery in which
the ground truth is known by construction: we deliberately leak trained classes into the
detector's fit set (positives) and build matched clean controls (negatives), and ask
whether the fingerprint fires exactly on the contaminated settings and nowhere else.

\subsection{Design}
Two architectures, ResNet-50~\citep{resnet} (a CNN) and ViT-B/16~\citep{vit} (a
transformer), ImageNet-pretrained and fine-tuned, on two base datasets, CIFAR-10 and
CIFAR-100~\citep{cifar}, the full $2\times2$ of four fine-tuned backbones (ResNet-50 and
ViT-B/16, each on CIFAR-10 and CIFAR-100). We are explicit that the $52$ settings are
not independent draws: they are class/group re-designations on these four backbones
(the CIFAR-100 leaks, for instance, are five 5-class groups per backbone sharing one model,
fit set, and covariance), so the confidence intervals below are within-design, not
population-level, and the breadth claim is ``four backbones spanning a CNN and a transformer
on two datasets,'' a full $2\times2$ rather than a large independent sample. Holding the
model fixed, we score three regimes on cached penultimate features and vary only what
the ``OOD'' set is:
\begin{itemize}\setlength{\itemsep}{1pt}
  \item \textbf{leak} ($20$ settings): the OOD class's training data is inside the
  one-class fit set (CIFAR-10: single classes; CIFAR-100: 5-class groups, $500$ OOD test
  points). The fingerprint should fire.
  \item \textbf{clean-excl} ($20$, matched): the identical model and OOD class, but
  that class is removed from the fit set. This isolates fit-set membership (the one variable a leak is about) with no retraining. Should not fire.
  \item \textbf{clean-novel} ($12$): OOD is a different source: SVHN~\citep{svhn},
  DTD~\citep{dtd}, or the other CIFAR (a hard near-OOD pair). Should not fire.
\end{itemize}
We apply the same rule as on the real leak, $\textsc{fired}=(\oraclelda\!\ge\!0.95)\wedge
(\text{best-unsup}\!<\!0.65)$, best-unsup $=\max(\text{Maha},k\text{NN})$, with a
shrinkage-LDA supervised probe on a held-out split;
three seeds vary the probe split. Full per-setting and per-seed numbers are in
Appendix~\ref{app:validate}. The diagnostic operates directly on penultimate
embeddings and needs no perturbation machinery: we audit in this space deliberately,
because it is where a held-out trained class is unsupervisedly separable, and it is
also the cheapest, most widely available representation. Appendix~\ref{app:prs77} reports
a spot-check on the original $77$-dimensional perturbation signature: the fingerprint's
firing transfers there cleanly (all $15$ leak runs fire; far-OOD does not), but
the cheap no-retrain fit-set-exclusion contrast loses near-OOD specificity in that weaker signature, itself a decodable-but-not-detectable effect
(Section~\ref{sec:gap}), which is precisely why the audit belongs in the embedding space,
not in the perturbation signature.

\subsection{Result}
The fingerprint attains sensitivity $18/20$ and specificity $31/32$, with all but
one \textsc{fired} decision invariant to the supervised-probe seed (Figure~\ref{fig:validate},
Table~\ref{tab:validate}; the single seed-unstable setting is one borderline CIFAR-100$\times$ViT
leak group). We are precise about what this invariance means: the three seeds vary only the
held-out LDA probe split, so it reflects that the settings sit far from the thresholds, not
replication across independent data or model re-draws. The decisive evidence is the
matched contrast: for the same model and class, best-unsup is $0.25$--$0.78$ when the
class is leaked into the fit set and rises to $0.81$--$1.00$ when it is excluded, while
supervised decodability stays $\approx\!1.0$ throughout, isolating fit-set membership as the
cause, exactly mirroring the real Tobacco contrast ($0.326\!\to\!0.911$). All $20$ clean-excl
controls and all eight far-OOD (SVHN/DTD) controls are correct.

\subsection{Threshold Robustness, Disaggregation, and Confidence}
The two thresholds ($\oraclelda\!\ge\!0.95$, best-unsup $<0.65$) were fixed
a priori from the single Tobacco leak, not tuned on this battery, so we check
that the decision is not knife-edge. Treating leak-vs-clean as ground truth, the
fingerprint score (oracle-gated $1-\text{best-unsup}$) separates the two classes with
AUROC $0.995$, and the \textsc{fired} decisions are stable across a wide band
of the unsupervised cutoff (Table~\ref{tab:threshold}): specificity is $0.97$ for every
$\tau\!\in\![0.65,0.75]$ and rises to $1.00$ at $\tau\!\le\!0.60$, while sensitivity is
$\ge\!0.80$ for $\tau\!\ge\!0.60$. Sweeping the oracle gate as well, the decision is
likewise stable for a gate in $[0.90,0.95]$ ($18/20$, $31/32$); tightening it further trades
sensitivity for specificity as expected ($17/20$, $32/32$ at $0.97$), since a higher
decodability bar excludes the harder leaks. Because the leak and clean populations are nearly
separable, the exact cutoff is not load-bearing. Disaggregating specificity by control
type is more informative than the pooled $31/32$: the matched, confound-free
fit-set-exclusion controls are $20/20$, far-OOD (SVHN/DTD) are
$8/8$, and the hard cross-dataset near-OOD negatives are $3/4$; the lone false positive lives entirely in the last, hardest cell. Wilson $95\%$
intervals reflect the modest $n$: sensitivity $18/20=0.90$ $[0.70,0.97]$, specificity
$31/32=0.97$ $[0.84,0.99]$, fit-set-exclusion $20/20$ $[0.84,1.0]$, hard near-OOD
$3/4=0.75$ $[0.30,0.95]$. We report these wide intervals plainly: the battery
establishes the effect and its direction across the full $2\times2$ of four fine-tuned
backbones (two architectures, two datasets), while precise rates on the hardest cell await the in-the-wild audit
(Section~\ref{sec:discuss}).

\begin{table}[htbp]

\centering\small
\begin{tabular}{@{}lccccc@{}}
\toprule
best-unsup cutoff $\tau$ & 0.55 & 0.60 & \textbf{0.65} & 0.70 & 0.75 \\
\midrule
sensitivity (leak, $n{=}20$)  & 0.65 & 0.80 & \textbf{0.90} & 0.90 & 0.90 \\
specificity (clean, $n{=}32$) & 1.00 & 1.00 & \textbf{0.97} & 0.97 & 0.97 \\
\bottomrule
\end{tabular}
\caption{Threshold robustness on the controlled battery ($52$ settings). The unsupervised
cutoff $\tau$ is varied with the oracle gate fixed at $0.95$; the paper's operating point is
$\tau{=}0.65$ (bold). Decisions are stable across $\tau\!\in\![0.65,0.75]$ and
leak-vs-clean separation is near-perfect (AUROC $0.995$), so the rule is not tuned to
a single threshold.}
\label{tab:threshold}
\end{table}

\begin{figure}[htbp]
\centering
\includegraphics[width=0.74\linewidth]{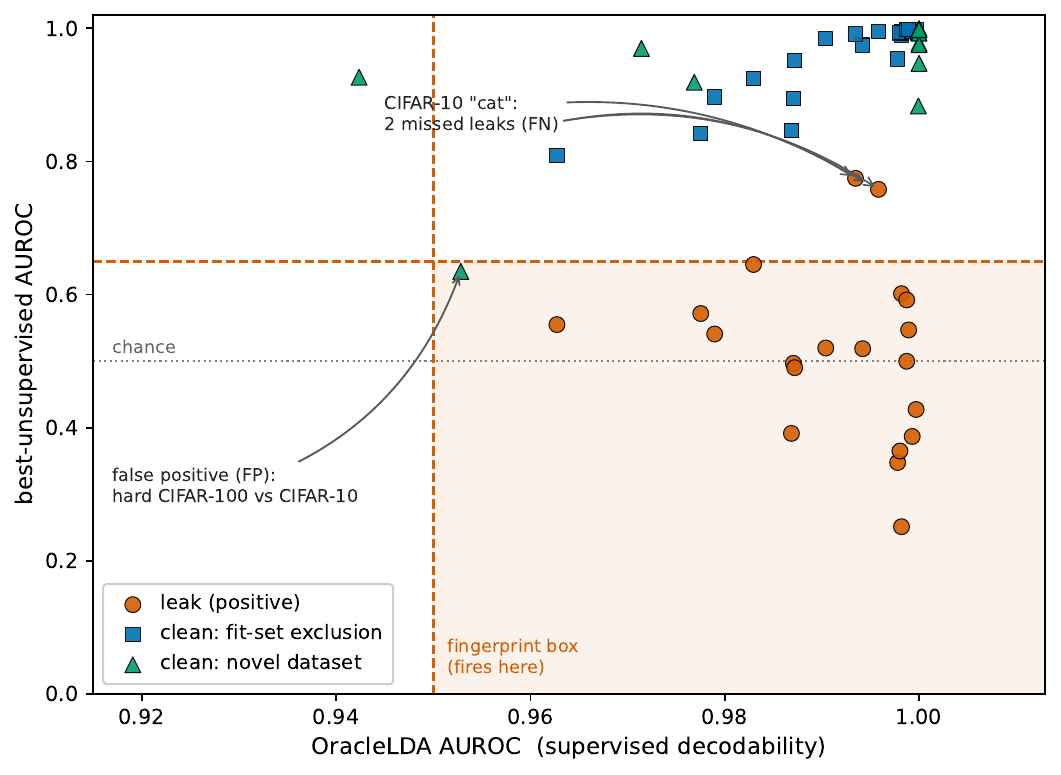}
\caption{Controlled validation of the leak fingerprint ($52$ settings, full $2\times2$).
All settings have high supervised decodability (OracleLDA $\ge 0.94$), so the horizontal axis
is cropped to the populated range. Leaked settings (orange circles) fall in the fingerprint box (high decodability with unsupervised detection collapsed below $0.65$), while both
families of clean controls (fit-set exclusion, blue squares; novel dataset, green triangles)
sit outside it. The only errors are boundary points: two leaks (CIFAR-10 ``cat'' on
both ResNet-50 and ViT-B/16, best-unsup $0.78$/$0.76$, just above the threshold) and one
clean cross-dataset pair (CIFAR-100 with CIFAR-10 as OOD) at $(0.953,0.635)$, just inside it.
Sensitivity $18/20$, specificity $31/32$.}
\label{fig:validate}
\end{figure}

\begin{table}[htbp]

\centering\small
\begin{tabular}{@{}lccccc@{}}
\toprule
Regime & $n$ & OracleLDA & best-unsup & firings & correct \\
\midrule
\textbf{leak} (should fire)        & 20 & 0.96--1.00 & 0.25--0.78 & 18 & \textbf{18/20} (sensitivity) \\
\textbf{clean-excl} (should not)   & 20 & 0.96--1.00 & 0.81--1.00 & 0  & \textbf{20/20} \\
\textbf{clean-novel} (should not)  & 12 & 0.94--1.00 & 0.64--1.00 & 1  & \textbf{11/12} \\
\midrule
\multicolumn{6}{l}{\textbf{Overall: sensitivity $18/20=0.90$, \quad specificity $31/32=0.97$.}}\\
\bottomrule
\end{tabular}
\caption{Controlled-validation summary ($52$ settings, full $2\times2$; mean over $3$ seeds,
all but one \textsc{fired} decision identical across seeds). The matched leak vs.\
clean-excl rows use the same models and OOD classes, differing only in whether the OOD
class is in the fit set.}
\label{tab:validate}
\end{table}

\subsection{Edge Cases Delineate the Diagnostic's Scope}
(i) Both false negatives are the same class, CIFAR-10 ``cat'', on both
ResNet-50 (best-unsup $0.775$) and ViT-B/16 ($0.758$): a feature-peripheral class an
unsupervised detector ranks as borderline-OOD even when its training data is in the fit set.
That the identical class misses on two independent architectures is itself informative, since it shows the miss is a property of the class rather than a fluke of one model. (ii) The lone false positive
is a CIFAR-100 model with CIFAR-10 as OOD ($\oraclelda{}=0.953$, best-unsup $=0.635$, both within $0.02$ of the thresholds). CIFAR-100$\leftrightarrow$CIFAR-10 is a textbook
intrinsically-hard near-OOD pair: CIFAR-10's broad categories are covered by CIFAR-100's
fine ones, so CIFAR-10 falls inside the CIFAR-100 feature distribution and is itself decodable but not detectable (Section~\ref{sec:gap}), with no leak at all; the
three other cross directions (including the same pair on ViT) correctly do not fire. This pins
down the scope precisely: the reliable signal of contamination is the fit-set-exclusion
contrast (perfect, $20/20$), not a fixed single-setting threshold applied blindly to an
arbitrary benchmark, because a genuinely hard clean pair can also drive unsupervised detection
down past the $0.65$ threshold (here to $0.635$) without any leak.

\section{An In-the-Wild Audit of Standard OOD Benchmarks}
\label{sec:audit}
The fingerprint's purpose is to audit existing benchmarks cheaply. We now do exactly
that, on the de-facto standard OOD-benchmark dataset pairs used across the literature (the
building blocks of OpenOOD~\citep{openood}). For each of the four fine-tuned backbones we score
every standard OOD source against it (CIFAR-10, CIFAR-100, SVHN, DTD, MNIST~\citep{mnist},
FashionMNIST~\citep{fmnist}, and STL-10~\citep{stl10}) in penultimate-embedding space under
the correct ID-train-only fit, giving $24$ near/far OOD benchmark pairs ($\times 3$
seeds; full table in Appendix~\ref{app:audit}).

The fingerprint fires on exactly one of $24$ pairs (Figure~\ref{fig:audit}): on
no far-OOD pair ($0/18$: SVHN, DTD, MNIST, FashionMNIST, STL-10 all stay well above the
threshold, best-unsup $0.75$--$1.00$), and on only one of the six semantically-near pairs, the CIFAR-100 model with CIFAR-10 as OOD ($\oraclelda{}=0.953$, best-unsup $=0.635$),
the same intrinsically-hard pair our controlled study already isolates as a
threshold-scope false positive (and it is architecture-specific: the ViT backbone on the same
pair does not fire). STL-10, which shares $9$ of CIFAR-10's $10$ classes and is a
genuine semantic near-OOD, does not fire on either architecture (best-unsup
$0.77$/$0.80$). This has two consequences. Standard cross-dataset benchmark construction
does not introduce the fit-set leak: it never places a trained class inside the one-class fit
set, so the leak we diagnose simply does not arise, and the only fire is the predicted
hard-pair scope rather than a contamination. The diagnostic is therefore specific in the wild ($23/24$ correct under a blind single-threshold rule, $0$ far-OOD false fires). The leak
we found in the Tobacco document benchmark was therefore a construction-discipline lapse the
standard image protocol avoids, and one our cheap test would have caught.

\begin{figure}[htbp]
\centering
\includegraphics[width=0.74\linewidth]{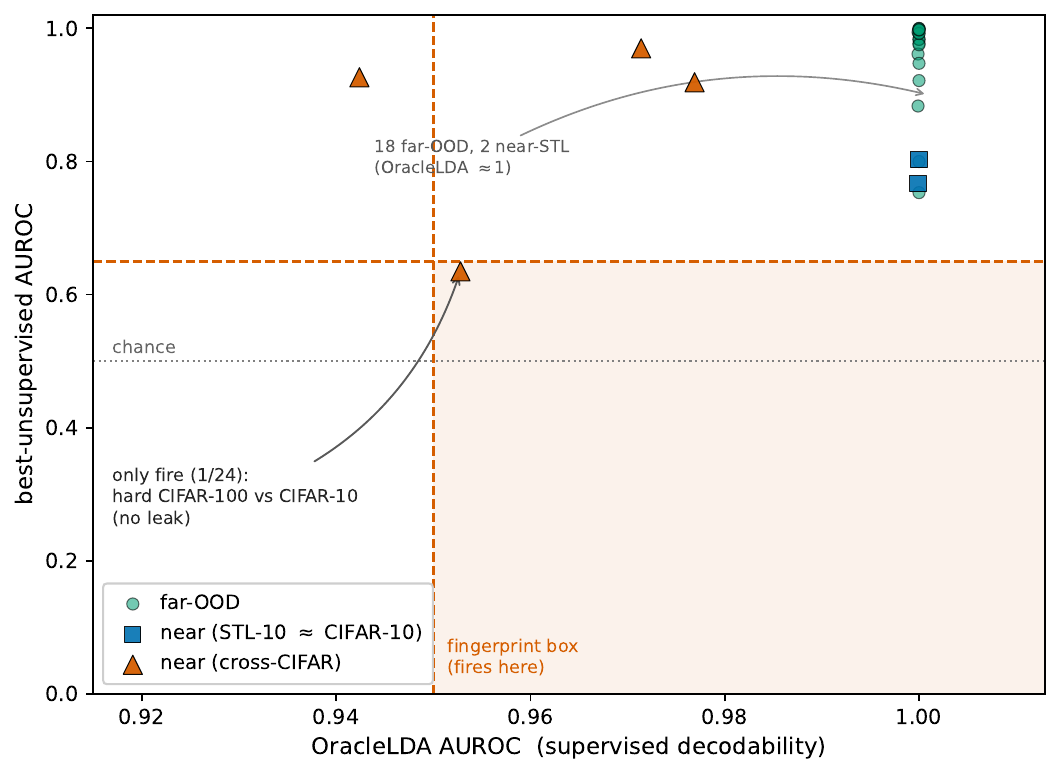}
\caption{In-the-wild audit: $24$ standard near/far OOD benchmark pairs across four backbones,
scored by the fingerprint in embedding space (the horizontal axis is cropped to the populated
range, OracleLDA $\ge 0.94$; the $20$ far-OOD and near-STL pairs at OracleLDA $\approx 1$ overlap
near $x{=}1$). Every far-OOD pair (and every semantically-near pair but one) sits outside the
fingerprint box. The single fire is the intrinsically-hard CIFAR-100$\to$CIFAR-10 pair
(the documented threshold-scope false positive, no leak); STL-10, which shares $9/10$ CIFAR-10
classes, correctly does not fire. Standard cross-dataset construction does not introduce the
fit-set leak ($23/24$ correct, $0/18$ far-OOD false fires).}
\label{fig:audit}
\end{figure}

We repeat the audit in the leak's native modalities, applying the same embedding-space
fingerprint to $12$ cached RoBERTa (text) and LayoutLMv3 (document) near/far-OOD settings
(Appendix~\ref{app:crossmodal}). It yields zero false positives: all $11$ clean settings (the held-out-Advertisement document split, four document and three text leave-class-out
holdouts, and the SST-2$\to$\{20news, AGNews, MNLI\} cross-task pairs) stay outside the
fingerprint box, so specificity in the wild holds across all three modalities. One case is
especially telling: the genuinely-hard \texttt{20news/mac.hw} holdout has low
unsupervised separability (best-unsup $0.57$) yet is correctly not flagged, because its
supervised decodability ($\oraclelda{}=0.91<0.95$) fails the oracle gate (the gate distinguishing a hard novel class from a leaked trained one), precisely the
discrimination a purely unsupervised overlap index cannot make (Section~\ref{sec:related},
Appendix~\ref{app:headtohead}). The one genuinely-leaked setting, Tobacco/Memo, sits at the
decision boundary in embedding space (mean best-unsup $0.64$, firing on $2$ of $5$ seeds) and is
caught decisively only in its native $77$-d perturbation signature (best-unsup $0.33$;
Section~\ref{sec:leak}), consistent with the representation-dependence documented in Appendix~\ref{app:prs77}, which is why embedding space is the right place to screen for
specificity while the leak signal itself is sharpest in the detector's own
representation.

\section{A Clean Re-evaluation: Decodable but Not Detectable}
\label{sec:gap}
Correcting the protocol removes the artifact and lets us state what is actually
true about perturbation-response detection. Across all $19$ clean settings
(a representative subset in Figure~\ref{fig:gap} and Table~\ref{tab:summary}; full
per-setting numbers released with the code), three architectures (ViT-B/16, RoBERTa, LayoutLMv3), seven datasets, the
leave-class-out holdouts in two domains, and an activation-noise variant, the picture is uniform. The perturbation detector provides no useful improvement
over a plain Mahalanobis-on-embeddings baseline: of the twelve settings in
Table~\ref{tab:summary}, \prs-Maha falls below plain Mahalanobis on ten and exceeds
it only on LOO Resume (by $0.008$) and on the uniformly hard \texttt{mac.hw} (at a
near-chance $0.595$, where every unsupervised score is poor); it is also beaten on the
clean far gate by a single projected dimension ($0.880$ vs $0.868$) and frequently by
an off-the-shelf Isolation Forest on its own signatures ($0.916$ vs $0.868$), and in
no setting does it approach the supervised \oraclelda. The signal is nonetheless there: the
supervised \oraclelda{} reads it at $0.87$--$1.00$, leaving a persistent gap
$\Gamma=\oraclelda-\max(\text{unsup.})$ that runs from $\approx 0$ on clean far-OOD
(CIFAR$\to$SVHN: $0.001$), is modest ($0.02$--$0.10$) on most near-OOD, and reaches
$\approx 0.29$ on the hardest clean setting (\texttt{mac.hw}; Table~\ref{tab:summary}).
The signal is decodable with supervision but not detectable without it.

\begin{figure}[htbp]
\centering
\begin{tikzpicture}
\begin{axis}[
    width=\linewidth, height=7.0cm,
    ybar=0pt, bar width=8pt,
    enlarge x limits=0.055,
    ymin=0.3, ymax=1.13,
    ytick={0.4,0.5,0.6,0.7,0.8,0.9,1.0},
    ymajorgrids, grid style={gray!22},
    ylabel={AUROC}, ylabel style={font=\sffamily\small},
    symbolic x coords={CIFAR/SVHN,MNLI,AGNews,SST2/20news,Tob.\ Advert,Resume,Form,Memo,Scientific,mac.hw,G2 leaked},
    xtick=data,
    x tick label style={rotate=38, anchor=east, font=\sffamily\scriptsize},
    y tick label style={font=\sffamily\scriptsize},
    legend style={at={(0.5,1.07)}, anchor=south, legend columns=3,
                  font=\sffamily\scriptsize, draw=gray!40, fill=white,
                  /tikz/every even column/.append style={column sep=10pt}},
    legend cell align=left, axis line style={gray!55}, tick align=outside,
    clip=false,
]
\addplot[draw=oracleblue!85!black, fill=oracleblue!75,
         area legend] coordinates {
 (CIFAR/SVHN,0.999) (MNLI,0.913) (AGNews,0.972) (SST2/20news,0.994)
 (Tob.\ Advert,0.993) (Resume,0.992) (Form,0.967) (Memo,0.985)
 (Scientific,0.959) (mac.hw,0.882) (G2 leaked,1.000)};
\addlegendentry{OracleLDA (supervised ceiling)}
\addplot[draw=unsupred!85!black, line width=0.5pt,
         pattern={Lines[angle=45,distance=2.6pt,line width=0.6pt]},
         pattern color=unsupred!90!black,
         area legend] coordinates {
 (CIFAR/SVHN,0.999) (MNLI,0.898) (AGNews,0.940) (SST2/20news,0.927)
 (Tob.\ Advert,0.970) (Resume,0.888) (Form,0.914) (Memo,0.937)
 (Scientific,0.862) (mac.hw,0.595) (G2 leaked,0.634)};
\addlegendentry{best unsupervised detector}
\draw[dashed, gapgray!140, thick] (axis cs:CIFAR/SVHN,0.5) -- (axis cs:G2 leaked,0.5);
\node[font=\sffamily\scriptsize, gapgray!200, anchor=south west, fill=white, inner sep=1pt]
   at (axis cs:CIFAR/SVHN,0.505) {chance};
\addlegendimage{dashed, gapgray!140, thick}
\addlegendentry{chance ($0.5$)}
\draw[leakpurple, thick, rounded corners, dash pattern=on 2pt off 1.5pt]
   ([xshift=-13pt]axis cs:mac.hw,0.34) rectangle ([xshift=13pt]axis cs:G2 leaked,1.025);
\node[leakpurple, font=\sffamily\bfseries\scriptsize, align=center, anchor=south]
   at (axis cs:mac.hw,1.04) {leaked / hard: $\Gamma$ large};
\def\grpy{0.075}   
\def\grpl{0.045}   
\draw[gray!90, thick] ([xshift=-9pt]axis cs:CIFAR/SVHN,\grpy) -- ([xshift=9pt]axis cs:CIFAR/SVHN,\grpy);
\node[gray!120, font=\sffamily\scriptsize\itshape, anchor=north] at (axis cs:CIFAR/SVHN,\grpl) {vision};
\draw[gray!90, thick] ([xshift=-9pt]axis cs:MNLI,\grpy) -- ([xshift=9pt]axis cs:SST2/20news,\grpy);
\node[gray!120, font=\sffamily\scriptsize\itshape, anchor=north] at (axis cs:AGNews,\grpl) {text};
\draw[gray!90, thick] ([xshift=-9pt]axis cs:Tob.\ Advert,\grpy) -- ([xshift=9pt]axis cs:Scientific,\grpy);
\node[gray!120, font=\sffamily\scriptsize\itshape, anchor=north] at (axis cs:Form,\grpl) {documents};
\draw[leakpurple!90, thick] ([xshift=-9pt]axis cs:mac.hw,\grpy) -- ([xshift=9pt]axis cs:G2 leaked,\grpy);
\node[leakpurple, font=\sffamily\scriptsize\itshape, anchor=north]
   at ($(axis cs:mac.hw,\grpl)!0.5!(axis cs:G2 leaked,\grpl)$) {hard / leaked};
\end{axis}
\end{tikzpicture}
\caption{Decodable but not detectable. For each setting, the supervised reader
(OracleLDA, solid blue) recovers the OOD signal nearly perfectly, while the best
unsupervised detector (hatched orange) lags; the gap $\Gamma$ between them is signal that
is present but unreadable without labels. The gap is near-zero for easy far-OOD
(e.g.\ CIFAR/SVHN) and modest across the genuine near-OOD settings (typically
$0.02$ to $0.10$); it grows large only in the highlighted region, where the
genuinely hard novel class (\texttt{mac.hw}) and the leaked benchmark (\texttt{G2},
shown for contrast, not in the clean set) drive the unsupervised score toward
or below chance while decodability stays high (Section~\ref{sec:gap}). We do not claim
the gap rises monotonically with OOD closeness; see the retraction in
Section~\ref{sec:theory}.}
\label{fig:gap}
\end{figure}

\begin{table}[htbp]

\centering\small
\begin{tabular}{@{}lccccc c@{}}
\toprule
Setting & \prs-Maha & Maha & $k$NN & \oraclelda{} & $\Gamma$ & cond$(\Sig)$ \\
\midrule
G1 SST-2$\to$20news (far)     & 0.868 & \textbf{0.927} & 0.924 & 0.994 & 0.067 & 1996 \\
SST-2$\to$MNLI                & 0.837 & \textbf{0.898} & 0.883 & 0.913 & 0.015 & 1996 \\
SST-2$\to$AGNews              & 0.863 & \textbf{0.940} & 0.928 & 0.972 & 0.033 & 1996 \\
G3 CIFAR-10$\to$SVHN (far)    & 0.980 & \textbf{0.999} & 0.994 & 0.999 & 0.001 & 1436 \\
Tobacco Advert.\ (clean)      & 0.936 & \textbf{0.970} & 0.961 & 0.993 & 0.023 & 596 \\
LOO Memo                      & 0.911 & \textbf{0.937} & 0.924 & 0.985 & 0.048 & 769 \\
LOO Resume                    & \textbf{0.888} & 0.880 & 0.883 & 0.992 & 0.104 & 600 \\
LOO Form                      & 0.891 & \textbf{0.914} & 0.902 & 0.967 & 0.053 & 531 \\
LOO Scientific                & 0.818 & \textbf{0.862} & 0.849 & 0.959 & 0.097 & 670 \\
20news \texttt{mac.hw} (mid)  & \textbf{0.595} & 0.514 & 0.571 & 0.882 & 0.287 & 1796 \\
act-noise 20news              & 0.838 & \textbf{0.927} & 0.924 & 0.965 & 0.038 & 2883 \\
act-noise AGNews              & 0.799 & \textbf{0.940} & 0.928 & 0.918 & $-0.021$ & 2883 \\
\midrule
G2 Memo (leaked)       & 0.326 & 0.555 & 0.634 & 1.000 & 0.366 & 596 \\
\bottomrule
\end{tabular}
\caption{Clean re-evaluation, 5-seed mean \auroc{} ($\Gamma=\oraclelda-\max(\text{\prs-Maha},\text{Maha},k\text{NN})$ over the
three unsupervised detectors shown; best unsupervised in bold). \prs-Maha tops its
row only twice (Resume, mac.hw), both by a negligible margin or in a near-chance
regime. The signal is decodable (\oraclelda{} high) but not detectable (every unsupervised
method below it). The leaked G2 row (italic, excluded from the clean set) is shown
once for contrast: its $\Gamma=0.37$ is the fingerprint of
Remark~\ref{rem:fingerprint}. Abbreviations: LOO $=$ leave-class-out;
act-noise $=$ Gaussian activation-noise perturbation instead of head masking.}
\label{tab:summary}
\end{table}

The pattern is not an artifact of the perturbation operator, the detector, or the
signature design. Replacing head masking with Gaussian activation noise
reproduces the decodability gap almost exactly: the oracle-minus-\prs{} gap is
$0.127$ under activation noise versus $0.126$ under head masking on 20news (this is
the \oraclelda{}$-$\prs{} gap, larger than the $\Gamma$ column of
Table~\ref{tab:summary}, which measures \oraclelda{} minus the best of the
three unsupervised detectors shown there). Beyond
Mahalanobis, OC-SVM, Isolation Forest, and LOF all fail to reach \oraclelda{}.
Within the signature, the displacement block, the literal perturbation
response, is the weakest of the four blocks in every setting, so what little
\prs{} achieves comes from centroid/norm geometry a plain distance already exploits.
Full tables are in Appendix~\ref{app:tables}.

\section{Why the Signal Is Decodable but Not Detectable}
\label{sec:theory}
The gap follows from two short results; proofs are in Appendix~\ref{app:proofs}. Let ID
and OOD signatures have means $\mu_0,\mu_1$, shared covariance $\Sig$, and
$\Delta=\mu_1-\mu_0$.

\begin{proposition}[An ID-only detector cannot use the discriminating direction]
\label{prop:sign}
For the one-class Gaussian (Mahalanobis) detector fit on the ID law $P_0$ alone, the
\auroc{} against an OOD mean-shift $\Delta$ (shared $\Sig$) depends on $\Delta$ only
through $\norm{\Sig^{-1/2}\Delta}$ (a distance from the ID bulk), never through its
orientation; the same holds for any scorer whose level sets respect the ID geometry
(e.g.\ $k$NN density). More fundamentally, an ID-only scorer never observes $\mu_1$,
so it cannot select the discriminating direction or attain the sign-agnostic ceiling
$\max(\auroc,1-\auroc)$ that an oracle reaches by flipping a labeled direction.
\end{proposition}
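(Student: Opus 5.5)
The plan is to reduce everything to the whitened coordinates of the ID law and then use the rotation invariance of the standard Gaussian. I will work under the Gaussian model implicit in the statement: $z\sim\mathcal N(\mu_0,\Sig)$ under $P_0$ and $z\sim\mathcal N(\mu_1,\Sig)$ under the OOD law. The detector is the population version of Eq.~\eqref{eq:maha}, with $\mu=\mu_0$ and the shrinkage target absorbed into $\Sig$. Write $z=\mu_0+\Sig^{1/2}g$ with $g\sim\mathcal N(0,I_D)$ under ID, and $z=\mu_0+\Delta+\Sig^{1/2}g$ under OOD. Put $\delta=\Sig^{-1/2}\Delta$. The score then becomes $s_{\mathrm{Maha}}(z)=-\norm{g}^2$ under ID and $-\norm{g+\delta}^2$ under OOD.

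\textbf{Step 1 (dependence on $\norm{\delta}$ only).} For any orthogonal $U$ we have $Ug\overset{d}{=}g$, so $\norm{g+\delta}^2\overset{d}{=}\norm{g+U\delta}^2$. Choosing $U$ to map $\delta$ to $\norm{\delta}e_1$ shows the OOD score law is a (negated) noncentral $\chi^2_D$ with noncentrality $\norm{\delta}^2$. The ID score law is a central $\chi^2_D$ and does not involve $\Delta$ at all. Since $\auroc=P(s_{\mathrm{ID}}>s_{\mathrm{OOD}})$ with independent draws, it is a functional of these two laws alone, hence a function of $\norm{\Sig^{-1/2}\Delta}$ only.

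\textbf{Step 2 (other scorers).} For ``level sets respect the ID geometry'' I will take the precise meaning: a score of the form $h(\norm{\Sig^{-1/2}(z-\mu_0)})$ with $h$ monotone. Examples are the population $k$NN density score under an elliptical ID law, or any log-density of $P_0$. \auroc{} is invariant under strictly monotone transformations of the score, so the Step~1 conclusion carries over verbatim. I will flag the finite-sample $k$NN case as holding only in the population limit.

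\textbf{Step 3 (no access to the direction, and the oracle gap).} An ID-only scorer is a map $P_0\mapsto s$, so it is identical for every OOD mean in the orbit $\{\mu_0+\Sig^{1/2}U\delta\}$, including $\pm\Delta$. By Step~1 its \auroc{} is constant on that orbit, so no choice informed by $P_0$ can adapt to the orientation of $\Delta$. For the ceiling, I will use the Neyman--Pearson lemma. The likelihood ratio between the two Gaussians is monotone in the Fisher projection $\Delta^\top\Sig^{-1}z$. Therefore the oracle LDA reader attains the maximal \auroc{}, equal to $\Phi(\norm{\delta}/\sqrt2)$. It does so regardless of sign, since a labeled direction can be flipped; this gives $\max(\auroc,1-\auroc)$. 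Any scorer whose ROC coincides with the optimal one must be a.e.\ a monotone function of $\Delta^\top\Sig^{-1}z$. A radial score $h(\norm{g})$ is not, whenever $\delta\neq0$: it is even under $g\mapsto -g$, while the Fisher projection is odd. Its \auroc{} is therefore strictly below the oracle's.

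\textbf{Main obstacle.} The hardest step is the strictness claim in Step~3. Pointwise ROC dominance is easy, but passing from ``not a.e.\ monotone in the LR'' to a strict inequality in \auroc{} needs care. I would argue it via the equality case of Neyman--Pearson at some false-positive level of positive measure. As a fallback, I would exhibit the gap directly in $D=1$ and extend it by conditioning on the orthogonal coordinates.
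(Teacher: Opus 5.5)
Your argument is correct in substance but takes a different, more quantitative route than the paper. The paper's proof has three steps:
\begin{itemize}
\item A one-class scorer is a functional $\mathcal F(P_0)$, so it is blind to $\mu_1$.
\item It computes only the \emph{mean} OOD score, $\E[s_{\mathrm{Maha}}(z)]=-(d+\Delta^\top\Sig^{-1}\Delta)$, and asserts that the \auroc{} is therefore a function of $\norm{\Sig^{-1/2}\Delta}$.
\item It disposes of the ceiling by noting that flipping a direction's sign needs labels.
\end{itemize}
A mean does not determine an \auroc{}. Your Step~1 (rotation invariance of $g$, giving the full central versus noncentral $\chi^2_D$ laws) supplies exactly the justification the paper skips. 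Your Step~3 proves more than the paper does: it identifies the ceiling as the Neyman--Pearson optimum $\Phi(\norm{\delta}/\sqrt2)$ and shows that a radial scorer falls strictly below it. The paper's route buys brevity and applies to every ID-only scorer at once; yours buys an explicit ceiling and a strict gap.

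The strictness you flag as the main obstacle closes without the $D=1$ fallback. Pointwise ROC dominance plus equal area gives equal ROCs at almost every level $\alpha\in(0,1)$. At such a level, Neyman--Pearson uniqueness forces the (possibly randomized) test to be a.e.\ the indicator of a half-space $\{\delta^\top g>c\}$. That half-space differs from its reflection under $g\mapsto -g$ on a set of positive measure, whereas every radial test is reflection-invariant. Phrased as ``even versus odd,'' the argument needs this extra step, because a constant score is both even and a monotone function of $\delta^\top g$.

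Two steps need repair.

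\textbf{The orbit claim overreaches.} The sentence ``by Step~1 its \auroc{} is constant on that orbit'' holds only for radial scorers, not for an arbitrary ID-only scorer. A linear score $v^\top(z-\mu_0)$ with $v$ chosen from $P_0$ has \auroc{} $a$ at $\Delta$ and $1-a$ at $-\Delta$. A lucky $v$ parallel to $\Sig^{-1}\Delta$ even attains the ceiling at $\Delta$. So for general ID-only scorers the claim must be read uniformly over the orbit, and your own Neyman--Pearson argument delivers that reading:
\begin{itemize}
\item $\Delta$ and $-\Delta$ induce the same $P_0$, hence the same score and the same level-$\alpha$ tests.
\item Attaining the ceiling at both would require those tests to be a.e.\ upper half-spaces $\{\delta^\top g>c\}$ for one and lower half-spaces for the other.
\item That is impossible when $\delta\neq0$, so the scorer misses the ceiling at at least one of $\pm\Delta$.
\end{itemize}
This is the precise content of ``cannot select the sign.''

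\textbf{Step~1 needs the exact covariance.} The detector's covariance must be exactly the shared $\Sig$, which holds in the population limit where the Ledoit--Wolf intensity vanishes. If a nonzero shrinkage $\tilde{\Sig}\neq\Sig$ is genuinely ``absorbed,'' the OOD score $-\norm{\tilde{\Sig}^{-1/2}(\Delta+\Sig^{1/2}g)}^2$ is no longer isotropic in $g$. Its law then depends on the orientation of $\Delta$.
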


This is exactly why the leak inverts the score (the trained class is near
the ID bulk, so $\norm{\Sig^{-1/2}\Delta}$ is small) while \oraclelda{}, which sees
labels, separates it perfectly, and it is why, even on clean data, a persistent
$\Gamma$ remains. A second result explains why the perturbation detector also loses
to a plain distance even on the clean settings. Diagonalize
$\Sig=\sum_i\lambda_i v_iv_i^\top$ and let the signal mass be
$s_i=(\Delta^\top v_i)^2/\norm{\Delta}^2$. The population separation decomposes as
\begin{equation}
\Delta^\top\Sig^{-1}\Delta = \norm{\Delta}^2\!\sum_i \frac{s_i}{\lambda_i}
\qquad\text{(Mahalanobis)}\quad\text{vs.}\quad
\norm{\Delta}^2\!\sum_i s_i\ \ \text{(Euclidean)} .
\label{eq:budget}
\end{equation}

\begin{observation}[Whitening misallocates its budget to noise]
\label{prop:budget}
If the signal mass concentrates in high-variance directions (large $s_i$ where
$\lambda_i$ is large), which we verify empirically (Appendix~\ref{app:tables}, mean
top-decile mass $0.86$--$0.94$), then Mahalanobis whitening places strictly more of
its separation budget on the low-variance directions than Euclidean does, and with
$\hat\Sig$ estimated from finite samples the realized score variance is dominated by
the smallest eigenvalues ($\propto\sum_i\lambda_i^{-2}$). When the signal is absent
there, whitening adds variance without separation, lowering \auroc.
\end{observation}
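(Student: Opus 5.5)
We prove the three claims of Observation~\ref{prop:budget} in turn, working throughout in the eigenbasis $\{v_i\}$ of $\Sig$ and using the decomposition~\eqref{eq:budget}. The observation compares allocations, so we first normalize both budgets. The Mahalanobis share on direction $i$ is $w^{M}_i=(s_i/\lambda_i)/\sum_j(s_j/\lambda_j)$. The Euclidean share is $w^{E}_i=s_i$, since $\sum_i s_i=1$.

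\textbf{Step 1: the reallocation claim.} Order the directions so that $\lambda_1\ge\lambda_2\ge\cdots$. The ratio $w^M_i/w^E_i=\lambda_i^{-1}/\sum_j s_j\lambda_j^{-1}$ is increasing in $i$. Hence the vector $w^M$ is obtained from $w^E$ by a monotone likelihood-ratio tilt toward small $\lambda_i$. A standard single-crossing argument then gives a first-order stochastic dominance: for every tail set $T_k=\{i\ge k\}$ of low-variance directions, $\sum_{i\in T_k}w^M_i\ge\sum_{i\in T_k}w^E_i$. The inequality is strict whenever $s$ places mass on at least two distinct eigenvalues, which the top-decile concentration hypothesis guarantees unless all the mass sits in a single eigenspace. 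This is precisely ``strictly more of its budget on the low-variance directions.''

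\textbf{Step 2: the finite-sample variance claim.} We would write the plug-in score as $(z-\hat\mu)^\top\hat\Sig^{-1}(z-\hat\mu)$ and expand it to first order in the estimation errors $\hat\Sig-\Sig$ and $\hat\mu-\mu$. Using the derivative of the inverse, $\delta(\Sig^{-1})=-\Sig^{-1}(\delta\Sig)\Sig^{-1}$, each eigen-coordinate contributes a perturbation of order $\lambda_i^{-2}\,\delta\Sig_{ii}\,z_i^2$. With $\delta\Sig_{ii}$ of order $\lambda_i/\sqrt n$ and $z_i^2$ of order $\lambda_i$, the induced score variance across directions sums to a quantity dominated by $\sum_i\lambda_i^{-2}$ times the sample-noise factor. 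It therefore blows up as the smallest eigenvalues shrink. We would state this as an order-of-magnitude scaling rather than an exact constant, since the paper calls the result an Observation; Ledoit--Wolf shrinkage only floors the $\lambda_i$ at $\alpha\,\mathrm{tr}\hat\Sig/D$.

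\textbf{Step 3: the AUROC claim.} We would use a Gaussian approximation to the score distributions, under which $\auroc\approx\Phi(\text{mean separation}/\text{pooled score s.d.})$. When $s_i\approx0$ on the low-variance directions, those directions add nothing to the numerator $\sum_i s_i/\lambda_i$, but by Step~2 they do add estimation variance to the denominator. The ratio, and with it $\auroc$, therefore decreases relative to the Euclidean score, whose noise is not amplified by $\lambda_i^{-1}$.

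The main obstacle is Step~2, because making the $\sum_i\lambda_i^{-2}$ dominance rigorous requires controlling the random-matrix fluctuations of the small eigenvalues. I would not attempt that. Instead I would present it as a first-order delta-method heuristic, matching the paper's framing of the result as an observation rather than a theorem, and rely on the empirical top-decile mass check for the hypothesis.
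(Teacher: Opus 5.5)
Your Step~1 is the paper's own argument: normalized budgets $b^E_i=s_i$ and $b^M_i\propto s_i/\lambda_i$, a ratio monotone in $\lambda_i$, hence first-order stochastic dominance toward small $\lambda$. Step~2 follows the paper's delta-method route, but as written it does not deliver the claimed scaling. Multiply out your own orders of magnitude. The term $\lambda_i^{-2}\,\delta\Sig_{ii}\,z_i^2$, with $\delta\Sig_{ii}\sim\lambda_i/\sqrt n$ and $z_i^2\sim\lambda_i$, is of order $n^{-1/2}$. So every eigen-direction contributes variance of order $1/n$ regardless of $\lambda_i$, and nothing in the sum singles out the small eigenvalues.

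This cancellation is structural, not a slip. For Gaussian ID data and an unshrunk sample covariance, the plug-in score $(z-\hat\mu)^\top\hat\Sig^{-1}(z-\hat\mu)$ is affine invariant (substitute $y=\Sig^{-1/2}(x-\mu_0)$). Its finite-sample law, and hence the \auroc, therefore depend on $\Sig$ only through $\norm{\Sig^{-1/2}\Delta}$. Eigenvalue errors proportional to $\lambda_i$ cannot make the small eigenvalues dominate, and the Ledoit--Wolf floor you mention only caps the weights $1/\hat\lambda_i$ further.

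The paper's proof reaches $\sum_i\lambda_i^{-2}\mathrm{Var}[\hat\lambda_i]$, which is also what your expansion gives. It then calls this dominated by the smallest $\lambda_i$, which silently treats $\mathrm{Var}[\hat\lambda_i]$ as roughly constant in $i$, i.e.\ an absolute error floor on the estimated eigenvalues. You need to state that as an explicit modeling assumption, in place of $\delta\Sig_{ii}\sim\lambda_i/\sqrt n$; one source could be noise from near-degenerate or non-Gaussian signature coordinates. Without it Step~2 fails, and the denominator argument of Step~3 fails with it.

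Two smaller points. First, the last sentence of the Observation does not need the finite-sample claim. Already at the population level, in whitened coordinates every direction adds a $\chi^2_1$ term of variance $2$ to $s_{\mathrm{Maha}}$, whether or not it carries signal. In the Euclidean score the same direction adds only variance $2\lambda_i^2$. So signal-free low-variance directions dilute the Mahalanobis signal-to-noise ratio while barely affecting the Euclidean one, which gives Step~3 a footing independent of Step~2.

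Second, in Step~1, strictness on a given low-variance set $\mathcal B$ requires $s$ to put positive mass both inside and outside $\mathcal B$, with the signal-carrying eigenvalues not all equal. Top-decile concentration supplies only the outside part. If $s_i=0$ exactly on $\mathcal B$, both budgets vanish there and equality holds. It is the small but nonzero empirical bottom-decile masses ($0.0006$--$0.0046$) that make the inequality strict.
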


This is a textbook consequence of the bias--variance behavior of whitening under
ill-conditioning rather than a new result; we make it explicit only because it
fully accounts for our negative finding. Empirically the signatures are
ill-conditioned (cond$(\Sig)$ up to $\sim\!2900$),
Mahalanobis spends $9$--$27\%$ of its budget on near-singular directions carrying
$\le 0.5\%$ of signal (mean excess $+0.185$ over Euclidean across these four
settings), and a shrinkage sweep
from full whitening toward Euclidean improves \auroc{} on most settings, by up to
$+0.08$ (largest on the ill-conditioned activation-noise variant), with a minority of
settings mildly preferring whitening (Appendix~\ref{app:tables}). Shrinking toward Euclidean when a
covariance is ill-conditioned is, of course, established practice
\citep{ledoitwolf}; our only addition is the explicit budget accounting that ties
the perturbation detector's failure to the eigenvalue spectrum of its own
signatures. Concurrent work \citep{wrongquestions} reports the same directional
effect at ImageNet scale, where interpolating a feature covariance from the
empirical estimate toward the identity improves OOD detection, and
\citet{janiak} independently find that Mahalanobis OOD detection is governed by
feature-space geometry, improving it by a radial rescaling that preserves feature
directions; both are independent confirmation of the budget mechanism we derive.

An earlier version of this work reported that $\Gamma$ grows as OOD nears
(Pearson $-0.79$). We retract it. The severity axis we used was itself an
unsupervised detectability score, making the correlation circular by construction;
recomputed against a detection-independent axis (squared MMD in a frozen
non-detector encoder) it collapses to Pearson $-0.035$ (Appendix~\ref{app:tables}).
We keep only the weak, true statement of Section~\ref{sec:gap}: the gap is persistent but
modest, and near-zero for clean far-OOD. We report this because the same
discipline, independent re-measurement, is what gives us confidence in the leak
result, and it also caught a genuine wiring bug (Appendix~\ref{app:repro}).

\section{Discussion}
\label{sec:discuss}
We separate what the evidence here supports from what it does not, since the leak result and the negative perturbation result invite over-reading in opposite directions.

\subsection{What We Claim}
The evidence supports three claims: (i) a near-OOD split built by repurposing a trained class
as ``OOD'' contaminates the ID fit set, manufacturing below-chance results that a
leave-class-out construction removes ($0.326\!\to\!0.911$ on the same class, $35$
fine-tunes, two domains); (ii) a retraining-free fingerprint (near-perfect supervised
decodability with unsupervised detection collapsed well below its clean level, best-unsup $<0.65$) flags the contamination, and is
validated, in the model's embedding space, to sensitivity $18/20$ and specificity $31/32$
on the full $2\times2$ of four fine-tuned backbones (Section~\ref{sec:validate}), and an
in-the-wild audit of $24$ standard OOD benchmark pairs fires on only the single
intrinsically-hard pair our scope predicts and no far-OOD pair (Section~\ref{sec:audit});
(iii) under the correction, perturbation-response signals are decodable but not
detectable and do not meaningfully improve on plain embedding distance, with a
theoretical account of why.
\subsection{What We Do Not Claim}
We do not claim the attention-masking method or its
original evaluation \citep{base} is flawed (it holds its OOD class out of training); the leak we study is a benchmark-construction pitfall that arises when that discipline lapses; we do not propose an OOD method; we do not show the gap
grows as OOD nears (retracted); and we do not claim an exhaustive, ImageNet-scale field survey: we audit $24$ standard near/far OOD benchmark pairs (Section~\ref{sec:audit})
and find standard cross-dataset construction free of the fit-set leak, but a larger survey
across ImageNet-scale near-OOD suites is left to future work. As a single-threshold rule the
fingerprint can false-fire on an intrinsically hard clean near-OOD pair
(Section~\ref{sec:validate},~\ref{sec:audit}), where the reliable contamination signal is the
fit-set-exclusion contrast; and that cheap no-retrain contrast is reliable in the
model's penultimate-embedding space (clean-excl $20/20$), not in the weaker
perturbation signature, where genuine leave-class-out retraining is required instead
(Appendix~\ref{app:prs77}).
\subsection{Why the Audit Is Cheap}
Confirming a $k$-class suspect split by the gold-standard
leave-class-out route costs $k$ full backbone re-trainings; the fingerprint costs
none (one supervised probe and one unsupervised score on already-cached features),
and the confound-free fit-set-exclusion contrast adds only a second detector fit per suspect class, trading $\mathcal{O}(k)$ re-trainings for $\mathcal{O}(k)$ cheap detector
fits. That cost reduction is what made the $24$-pair vision audit and the cross-modal text and
document audit (Section~\ref{sec:audit}) practical; a larger, ImageNet-scale survey is the natural
next step.

\section{Conclusion}
\label{sec:conclusion}
A perturbation-based OOD detector scoring below chance led us not to a broken method
but to a broken benchmark: a near-OOD protocol that scores a model on a class it was
trained to recognize. We correct the protocol, propose and validate a fingerprint that audits a
designated split for the same contamination without backbone re-training (in embedding
space: sensitivity $18/20$, specificity $31/32$ across the full $2\times2$ of four fine-tuned
backbones), run the audit it enables across $24$ standard OOD benchmark pairs (where
it fires on only the one intrinsically-hard pair our scope predicts, confirming standard
construction is clean and the test specific in the wild), and, under
the correction, show that perturbation signatures are decodable but not detectable,
adding nothing useful over the embeddings they come from. The transferable
contributions are the corrected evaluation and the validated, field-tested leak fingerprint;
the discipline that produced them, pre-registration plus independent
re-measurement, is what caught both a real bug and a circular claim before
publication.

\acks{We thank the maintainers of the public datasets (CIFAR-10/100, SVHN, DTD, MNIST,
Fashion-MNIST, STL-10, 20\,Newsgroups, Tobacco-3482) and the pretrained checkpoints used in this
study. This work received no specific grant from any funding agency in the public, commercial, or
not-for-profit sectors. The author declares no competing interests.}

\appendix

\section{Proofs}
\label{app:proofs}
\begin{proof}[Proof of Proposition~\ref{prop:sign}]
A one-class scorer is a functional $h=\mathcal F(P_0)$ of the ID law alone, so it
does not depend on $\mu_1$. For two Gaussians with shared $\Sig$, the ID-only
Mahalanobis score of an OOD point $z=\mu_1+\varepsilon$,
$\varepsilon\!\sim\!\mathcal N(0,\Sig)$, has
$\E[s_{\mathrm{Maha}}(z)]=-(d+\Delta^\top\Sig^{-1}\Delta)$, depending on $\Delta$
only through $\norm{\Sig^{-1/2}\Delta}$, a rotational invariant of $\Delta$ in the
whitened metric, never its orientation. Any \auroc{} from such an $h$ is a function
of $\norm{\Sig^{-1/2}\Delta}$ only. The oracle quantity $\max(\auroc,1-\auroc)$
requires choosing the sign of a fixed direction using labels, information
unavailable to $\mathcal F(P_0)$.
\end{proof}
\begin{proof}[Proof of Observation~\ref{prop:budget}]
Write the per-direction budgets as probability vectors $b^E_i=s_i$ and
$b^M_i=(s_i/\lambda_i)/Z$ with $Z=\sum_j s_j/\lambda_j$. Ordering by decreasing
$\lambda$, the ratio $b^M_i/b^E_i=(1/\lambda_i)/Z$ is non-decreasing as $\lambda_i$
decreases, so $b^M$ first-order stochastically dominates $b^E$ toward small
$\lambda$; hence $\sum_{i\in\mathcal B}b^M_i\ge\sum_{i\in\mathcal B}b^E_i$ on the
bottom-$\lambda$ set $\mathcal B$, with equality iff $\lambda_i$ is constant where
$s_i>0$. For the variance claim, with $u_i=(z-\mu_0)^\top v_i\sim\mathcal
N(0,\lambda_i)$ we have $s_{\mathrm{Maha}}=-\sum_i u_i^2/\lambda_i$; replacing
$\lambda_i$ by a noisy estimate and applying the delta method gives a variance
contribution $\propto\sum_i\lambda_i^{-2}\mathrm{Var}[\hat\lambda_i]$, dominated by
the smallest $\lambda_i$. Where $s_i\approx0$ there, this adds variance without
separation; Euclidean has no $\lambda_i^{-1}$ weighting and is unaffected.
\end{proof}

\section{Per-Seed Results for the Leak}
\label{app:perseed}
\begin{table}[htbp]

\centering\small
\begin{tabular}{@{}lccccc|c@{}}
\toprule
Method & s42 & s123 & s456 & s789 & s1024 & mean \\
\midrule
PRS-Maha & 0.4100 & 0.3933 & 0.2815 & 0.3050 & 0.2403 & 0.3260\,${\scriptstyle\pm0.065}$ \\
Mahalanobis & 0.4958 & 0.5294 & 0.7192 & 0.7223 & 0.3063 & 0.5546\,${\scriptstyle\pm0.156}$ \\
$k$NN & 0.5824 & 0.6864 & 0.8286 & 0.7867 & 0.2847 & 0.6337\,${\scriptstyle\pm0.194}$ \\
\ahss{} & 0.2748 & 0.0156 & 0.0855 & 0.0121 & 0.0907 & 0.0957\,${\scriptstyle\pm0.096}$ \\
OracleLDA & 1.0000 & 1.0000 & 1.0000 & 1.0000 & 1.0000 & 1.0000\,${\scriptstyle\pm0.000}$ \\
\bottomrule
\end{tabular}
\caption{Leaked Tobacco near-OOD, per-seed \auroc. \oraclelda{} is a perfect $1.000$
on every seed while every unsupervised score is depressed far below it (\prs-Maha and
\ahss{} below chance; Mahalanobis/$k$NN only $0.55$/$0.63$), the signature is
perfectly decodable, but the protocol leak makes the one-class detectors fail. This
is the empirical basis of the leak fingerprint (Remark~\ref{rem:fingerprint}).}
\label{tab:ps_leak}
\end{table}

\begin{table}[htbp]

\centering\small
\begin{tabular}{@{}lccccc|c@{}}
\toprule
Method & s42 & s123 & s456 & s789 & s1024 & mean \\
\midrule
PRS-Maha & 0.9021 & 0.8066 & 0.9357 & 0.8681 & 0.8260 & 0.8677\,${\scriptstyle\pm0.047}$ \\
Mahalanobis & 0.9280 & 0.8834 & 0.9549 & 0.9340 & 0.9339 & 0.9268\,${\scriptstyle\pm0.024}$ \\
$k$NN & 0.9044 & 0.8710 & 0.9632 & 0.9417 & 0.9391 & 0.9239\,${\scriptstyle\pm0.032}$ \\
AHSS & 0.0628 & 0.0540 & 0.4709 & 0.1119 & 0.2625 & 0.1924\,${\scriptstyle\pm0.158}$ \\
OracleLDA & 0.9988 & 0.9945 & 0.9968 & 0.9913 & 0.9875 & 0.9938\,${\scriptstyle\pm0.004}$ \\
\bottomrule
\end{tabular}
\caption{Corrected far-OOD reference (G1 SST-2$\to$20news), per-seed \auroc.
\prs-Maha loses to Mahalanobis on all five seeds. \ahss{} is inverted but its
sign-agnostic ceiling is high ($\approx0.81$), consistent with
Proposition~\ref{prop:sign}.}
\label{tab:ps_g1}
\end{table}

\section{Supporting Tables}
\label{app:tables}

\begin{table}[htbp]

\centering\small
\begin{tabular}{@{}lcccc|c@{}}
\toprule
Setting & $S1$ (disp.) & $S2$ ($+$norm) & $S3$ (centroid) & $\mathbf{S4}$ (full) & best baseline \\
\midrule
G1 SST-2$\to$20news      & 0.658 & 0.840 & 0.775 & \textbf{0.868} & Maha 0.927 \\
G3 CIFAR-10$\to$SVHN     & 0.946 & 0.972 & 0.975 & \textbf{0.980} & Maha 0.999 \\
G2b Tobacco Advert.      & 0.878 & 0.935 & 0.932 & \textbf{0.936} & Maha 0.970 \\
SST-2$\to$MNLI           & 0.692 & 0.827 & 0.773 & \textbf{0.837} & Maha 0.898 \\
SST-2$\to$AGNews         & 0.688 & 0.848 & 0.777 & \textbf{0.863} & Maha 0.940 \\
LOO Memo                 & 0.889 & 0.912 & 0.903 & \textbf{0.911} & Maha 0.937 \\
LOO Resume               & 0.872 & 0.880 & 0.893 & \textbf{0.888} & $k$NN 0.883 \\
LOO Form                 & 0.877 & 0.885 & 0.890 & \textbf{0.891} & Maha 0.914 \\
LOO Scientific           & 0.785 & 0.803 & 0.823 & \textbf{0.818} & Maha 0.862 \\
\bottomrule
\end{tabular}
\caption{Signature-block ablation (\prs-Maha, 5-seed mean \auroc). The $S1$
displacement block, the literal perturbation response, is the weakest column in
every row; and no block beats the raw baseline in any clean setting, except a
$\le\!0.01$ edge on LOO Resume (within seed-to-seed noise).}
\label{tab:ablation}
\end{table}

\begin{table}[htbp]

\centering\small
\begin{tabular}{@{}lccc|cc@{}}
\toprule
Setting & OC-SVM & Iso.\ Forest & LOF & \prs-Maha & \oraclelda{} \\
\midrule
G1 SST-2$\to$20news  & 0.799 & \textbf{0.916} & 0.595 & 0.868 & 0.994 \\
G2 Tobacco near (leak) & 0.320 & 0.218 & 0.508 & 0.326 & 1.000 \\
G2b Tobacco Advert.  & 0.943 & \textbf{0.960} & 0.894 & 0.936 & 0.993 \\
G3 CIFAR-10$\to$SVHN & 0.993 & 0.996 & 0.919 & 0.980 & 0.999 \\
LOO Memo             & 0.886 & \textbf{0.935} & 0.851 & 0.911 & 0.985 \\
SST-2$\to$AGNews     & 0.697 & \textbf{0.875} & 0.714 & 0.863 & 0.972 \\
SST-2$\to$MNLI       & 0.694 & 0.831 & 0.740 & 0.837 & 0.913 \\
\bottomrule
\end{tabular}
\caption{Off-the-shelf unsupervised detectors on the \prs{} signatures. None
approaches \oraclelda; Isolation Forest frequently beats \prs-Maha (bold),
showing that \prs{} is not even the best reader of its own signatures.}
\label{tab:detectors}
\end{table}

\begin{table}[htbp]

\centering\small
\begin{tabular}{@{}lccccccccc@{}}
\toprule
Setting & $\alpha{=}0$ & .01 & .05 & .1 & .25 & .5 & .75 & .9 & $\alpha{=}1$ \\
\midrule
G1 SST-2$\to$20news  & 0.868 & 0.867 & 0.865 & 0.865 & 0.866 & 0.873 & 0.886 & 0.898 & \textbf{0.908} \\
G3 CIFAR-10$\to$SVHN & 0.980 & 0.980 & 0.981 & 0.981 & 0.983 & 0.985 & 0.989 & 0.993 & \textbf{0.996} \\
G2b Advert.          & 0.934 & 0.935 & 0.938 & 0.940 & 0.943 & 0.946 & 0.949 & 0.952 & \textbf{0.953} \\
act-noise 20news     & 0.830 & 0.837 & 0.842 & 0.844 & 0.852 & 0.864 & 0.881 & 0.896 & \textbf{0.907} \\
LOO Form             & \textbf{0.891} & 0.891 & 0.891 & 0.890 & 0.889 & 0.888 & 0.887 & 0.885 & 0.881 \\
SST-2$\to$MNLI       & \textbf{0.838} & 0.837 & 0.834 & 0.832 & 0.828 & 0.825 & 0.824 & 0.822 & 0.808 \\
\bottomrule
\end{tabular}
\caption{Covariance-shrinkage sweep (\prs-Maha \auroc), $\alpha{=}0$ full whitening
$\to\alpha{=}1$ Euclidean (best per row in bold). Moving away from whitening helps on
most settings (by up to $+0.08$); a minority (e.g.\ Form, MNLI) mildly prefer full
whitening. This is consistent with Observation~\ref{prop:budget}.}
\label{tab:shrink}
\end{table}

\begin{table}[htbp]

\centering\small
\begin{tabular}{@{}lcccc@{}}
\toprule
Setting & cond$(\Sig)$ & sig.\ top-10\% & sig.\ bot-10\% & Maha budget bot-10\% \\
\midrule
G1 (20news)     & 2634 & 0.900 & 0.0006 & 0.089 \\
G3 (ViT)        & 1702 & 0.935 & 0.0020 & 0.166 \\
LOO Memo        & 1269 & 0.885 & 0.0033 & 0.229 \\
G2b (Advert.)   & 1244 & 0.858 & 0.0046 & 0.265 \\
\bottomrule
\end{tabular}
\caption{Mechanism (5-seed). The OOD signal lives in high-variance directions
(top-decile mass $0.86$--$0.94$); Mahalanobis nonetheless spends up to a quarter of
its budget on near-singular directions carrying $\le 0.5\%$ of signal. Mean
noise-amplification (Maha$-$Euclidean budget on the bottom decile) $=+0.185$ over
the four settings shown.}
\label{tab:mech}
\end{table}

\begin{table}[htbp]

\centering\small
\begin{tabular}{@{}llcc c@{}}
\toprule
Input pair & modality & MMD$^2$ (RBF) & $\Gamma$ & circular sev. \\
\midrule
SST-2$\to$20news     & text   & 0.560 & 0.067 & 0.927 \\
SST-2$\to$AGNews     & text   & 0.400 & 0.033 & 0.940 \\
SST-2$\to$MNLI       & text   & 0.143 & 0.015 & 0.898 \\
CIFAR-10$\to$SVHN    & vision & 0.171 & 0.001 & 0.999 \\
Tobacco Advert.      & doc    & 0.262 & 0.023 & 0.970 \\
Tobacco Memo         & doc    & 0.090 & 0.048 & 0.937 \\
Tobacco Resume       & doc    & 0.176 & 0.104 & 0.880 \\
Tobacco Scientific   & doc    & 0.096 & 0.097 & 0.862 \\
Tobacco Form         & doc    & 0.118 & 0.053 & 0.914 \\
\midrule
\multicolumn{2}{l}{Pearson($\Gamma$, severity)} & $-0.035$ & & $-0.787$ \\
\bottomrule
\end{tabular}
\caption{De-circularization of the retracted claim. On a detection-independent MMD
axis the gap--severity correlation is $\approx 0$; on the circular raw-Maha axis
(last column) it is $-0.79$. The ``gap grows as OOD nears'' claim does not survive.}
\label{tab:severity}
\end{table}

\section{Controlled-Validation: Full Per-Setting Results}
\label{app:validate}
All $52$ settings of the full $2\times2$ (mean over $3$ seeds; all but one \textsc{fired}
decision identical across seeds; the exception is the borderline \texttt{P\_c100\_vit\_g2}).
$\textsc{fired}=(\oraclelda\!\ge\!0.95)\wedge(\text{best-unsup}\!<\!0.65)$,
best-unsup $=\max(\text{Maha},k\text{NN})$. The \textsc{fired} column reports this rule's yes/no decision
for every setting (``no (miss)'' $=$ a leak that did not fire, i.e.\ a false negative; ``yes (FP)''
$=$ a clean setting that fired, i.e.\ a false positive); a plain ``no'' on the clean controls is
the intended outcome. Backbones: ImageNet-pretrained ResNet-50 /
ViT-B/16 fine-tuned $3$ epochs per base dataset (train acc $0.92$--$0.995$).

{\footnotesize\setlength{\tabcolsep}{4.5pt}
\begin{longtable}{@{}llcccc c@{}}
\caption{Per-setting controlled-validation results ($52$ settings, full $2\times2$). The
matched \texttt{P\_*} (leak) and \texttt{N\_excl\_*} (fit-set exclusion) rows share the same
model and OOD class. The only errors are \texttt{P\_c10\_rn\_3} and \texttt{P\_c10\_vit\_3}
(false negatives, the feature-peripheral ``cat'' class, on both architectures) and
\texttt{N\_c100\_rn\_cross} (false positive, the hard CIFAR-100$\to$CIFAR-10 near-OOD pair).}
\label{tab:validate_full}\\
\toprule
Setting & regime & OracleLDA & Maha & $k$NN & best-unsup & \textsc{fired} \\
\midrule
\endfirsthead
\multicolumn{7}{c}{\small\itshape Table \thetable\ (continued from previous page)}\\
\toprule
Setting & regime & OracleLDA & Maha & $k$NN & best-unsup & \textsc{fired} \\
\midrule
\endhead
\midrule
\multicolumn{7}{r}{\small\itshape continued on next page}\\
\endfoot
\bottomrule
\endlastfoot
P\_c100\_rn\_g0            & leak & 0.978 & 0.497 & 0.572 & 0.572 & yes \\
P\_c100\_rn\_g1            & leak & 0.987 & 0.497 & 0.409 & 0.497 & yes \\
P\_c100\_rn\_g2            & leak & 0.963 & 0.555 & 0.496 & 0.555 & yes \\
P\_c100\_rn\_g3            & leak & 0.979 & 0.536 & 0.540 & 0.541 & yes \\
P\_c100\_rn\_g4            & leak & 0.987 & 0.392 & 0.277 & 0.392 & yes \\
P\_c100\_vit\_g0           & leak & 0.987 & 0.473 & 0.490 & 0.490 & yes \\
P\_c100\_vit\_g1           & leak & 0.994 & 0.519 & 0.479 & 0.519 & yes \\
P\_c100\_vit\_g2           & leak & 0.983 & 0.645 & 0.635 & 0.645 & yes \\
P\_c100\_vit\_g3           & leak & 0.990 & 0.520 & 0.475 & 0.520 & yes \\
P\_c100\_vit\_g4           & leak & 0.998 & 0.348 & 0.267 & 0.348 & yes \\
P\_c10\_rn\_0              & leak & 0.998 & 0.577 & 0.601 & 0.601 & yes \\
P\_c10\_rn\_1              & leak & 0.998 & 0.251 & 0.169 & 0.251 & yes \\
P\_c10\_rn\_2              & leak & 0.999 & 0.490 & 0.500 & 0.500 & yes \\
P\_c10\_rn\_3 (cat)        & leak & 0.993 & 0.729 & 0.775 & 0.775 & no (miss) \\
P\_c10\_rn\_4              & leak & 0.998 & 0.347 & 0.365 & 0.365 & yes \\
P\_c10\_vit\_0             & leak & 1.000 & 0.415 & 0.427 & 0.427 & yes \\
P\_c10\_vit\_1             & leak & 0.999 & 0.387 & 0.355 & 0.387 & yes \\
P\_c10\_vit\_2             & leak & 0.999 & 0.592 & 0.555 & 0.592 & yes \\
P\_c10\_vit\_3 (cat)       & leak & 0.996 & 0.756 & 0.758 & 0.758 & no (miss) \\
P\_c10\_vit\_4             & leak & 0.999 & 0.458 & 0.547 & 0.547 & yes \\
\midrule
N\_excl\_c100\_rn\_g0      & clean-excl & 0.978 & 0.797 & 0.842 & 0.842 & no \\
N\_excl\_c100\_rn\_g1      & clean-excl & 0.987 & 0.895 & 0.821 & 0.895 & no \\
N\_excl\_c100\_rn\_g2      & clean-excl & 0.963 & 0.809 & 0.677 & 0.809 & no \\
N\_excl\_c100\_rn\_g3      & clean-excl & 0.979 & 0.897 & 0.860 & 0.897 & no \\
N\_excl\_c100\_rn\_g4      & clean-excl & 0.987 & 0.846 & 0.815 & 0.846 & no \\
N\_excl\_c100\_vit\_g0     & clean-excl & 0.987 & 0.944 & 0.951 & 0.951 & no \\
N\_excl\_c100\_vit\_g1     & clean-excl & 0.994 & 0.975 & 0.965 & 0.975 & no \\
N\_excl\_c100\_vit\_g2     & clean-excl & 0.983 & 0.925 & 0.908 & 0.925 & no \\
N\_excl\_c100\_vit\_g3     & clean-excl & 0.990 & 0.970 & 0.984 & 0.984 & no \\
N\_excl\_c100\_vit\_g4     & clean-excl & 0.998 & 0.946 & 0.954 & 0.954 & no \\
N\_excl\_c10\_rn\_0        & clean-excl & 0.998 & 0.991 & 0.995 & 0.995 & no \\
N\_excl\_c10\_rn\_1        & clean-excl & 0.998 & 0.990 & 0.989 & 0.990 & no \\
N\_excl\_c10\_rn\_2        & clean-excl & 0.999 & 0.988 & 0.997 & 0.997 & no \\
N\_excl\_c10\_rn\_3        & clean-excl & 0.993 & 0.978 & 0.992 & 0.992 & no \\
N\_excl\_c10\_rn\_4        & clean-excl & 0.998 & 0.973 & 0.993 & 0.993 & no \\
N\_excl\_c10\_vit\_0       & clean-excl & 1.000 & 0.993 & 0.997 & 0.997 & no \\
N\_excl\_c10\_vit\_1       & clean-excl & 0.999 & 0.991 & 0.995 & 0.995 & no \\
N\_excl\_c10\_vit\_2       & clean-excl & 0.999 & 0.994 & 0.998 & 0.998 & no \\
N\_excl\_c10\_vit\_3       & clean-excl & 0.996 & 0.989 & 0.995 & 0.995 & no \\
N\_excl\_c10\_vit\_4       & clean-excl & 0.999 & 0.981 & 0.998 & 0.998 & no \\
\midrule
N\_c100\_rn\_cross         & clean-novel (near $\to$C10)  & 0.953 & 0.589 & 0.635 & 0.635 & yes (FP) \\
N\_c100\_rn\_dtd           & clean-novel (far)            & 1.000 & 0.977 & 0.819 & 0.977 & no \\
N\_c100\_rn\_svhn          & clean-novel (far)            & 1.000 & 0.883 & 0.828 & 0.883 & no \\
N\_c100\_vit\_cross        & clean-novel (near $\to$C10)  & 0.977 & 0.919 & 0.889 & 0.919 & no \\
N\_c100\_vit\_dtd          & clean-novel (far)            & 1.000 & 0.994 & 0.977 & 0.994 & no \\
N\_c100\_vit\_svhn         & clean-novel (far)            & 1.000 & 0.947 & 0.928 & 0.947 & no \\
N\_c10\_rn\_cross          & clean-novel (near $\to$C100) & 0.942 & 0.896 & 0.926 & 0.926 & no \\
N\_c10\_rn\_dtd            & clean-novel (far)            & 1.000 & 0.969 & 0.993 & 0.993 & no \\
N\_c10\_rn\_svhn           & clean-novel (far)            & 1.000 & 0.931 & 0.975 & 0.975 & no \\
N\_c10\_vit\_cross         & clean-novel (near $\to$C100) & 0.971 & 0.970 & 0.964 & 0.970 & no \\
N\_c10\_vit\_dtd           & clean-novel (far)            & 1.000 & 1.000 & 0.996 & 1.000 & no \\
N\_c10\_vit\_svhn          & clean-novel (far)            & 1.000 & 0.998 & 0.993 & 0.998 & no \\
\end{longtable}
}

\section{Spot-Check on the Deployed $77$-d Perturbation Signature}
\label{app:prs77}
The controlled battery (Section~\ref{sec:validate}) operates on penultimate embeddings. To
test the fingerprint in the deployed representation (the $77$-dimensional
perturbation signature in which the original Tobacco leak surfaced), we re-ran the matched
leak / clean-excl design on the S4 signature of the ViT-B/16 CIFAR-10 model, reusing the v2
PRS extractor, signature, and scorers verbatim (five leaked classes, including ``cat''; three
seeds; identical rule). Table~\ref{tab:prs77} reports the result.

\subsection{What Transfers}
The fingerprint's firing carries over cleanly: all $15$ leak
runs fire (best-unsup $0.34$--$0.63$, \oraclelda{} $\approx0.99$), and the far-OOD control
(SVHN) is correctly rejected on every seed (best-unsup $\approx0.99$). Detecting a leak, and
rejecting far-OOD, therefore do not depend on the embedding space; the deployed signature
reproduces both ($5/5$ leaks, $1/1$ far-OOD).

\subsection{What Does Not, and Why}
The cheap no-retrain fit-set-exclusion control loses
near-OOD specificity in the signature space: $3$ of the $5$ matched clean-excl controls
false-fire (so sensitivity $5/5$, specificity $3/6$). The reason is intrinsic to this paper's
own thesis. A class merely removed from the fit set (without retraining) is still a
class the model was trained on, and the perturbation signature summarizes how the
representation responds to masking, not class identity; such a class is therefore not
unsupervisedly separable from the ID bulk in that signature (\oraclelda{} stays $\approx0.99$
while best-unsup sits at $0.5$--$0.7$). That is precisely decodable-but-not-detectable
(Section~\ref{sec:gap}) acting on the control. In the penultimate embedding the same class
is unsupervisedly separable, so the contrast is clean (clean-excl $20/20$,
Table~\ref{tab:validate_full}). Two consequences follow: (i) the audit should be run in the
penultimate-embedding space, where held-out trained classes are detectable, which is also
the cheapest, most available representation; and (ii) on the raw perturbation signature the
genuine leave-class-out retraining correction (the one used for the original Tobacco
contrast $0.326\!\to\!0.911$, which makes the class novel in the representation itself)
remains necessary; the cheap contrast does not substitute for it there. This sharpens, but
does not weaken, the main result: the fingerprint detects leaks in either space, and the
controlled specificity and the practical audit are properly stated in embedding space.

\begin{table}[htbp]

\centering\small
\begin{tabular}{@{}llcccc@{}}
\toprule
regime & class & firings & \oraclelda{} & best-unsup & \textsc{fired} \\
\midrule
leak       & 0       & 3/3 & 0.989 & 0.395 & yes \\
clean-excl & 0       & 3/3 & 0.989 & 0.517 & yes (FP) \\
leak       & 1       & 3/3 & 0.995 & 0.449 & yes \\
clean-excl & 1       & 2/3 & 0.995 & 0.691 & yes (FP) \\
leak       & 2       & 3/3 & 0.987 & 0.578 & yes \\
clean-excl & 2       & 0/3 & 0.987 & 0.692 & no \\
leak       & 3 (cat) & 3/3 & 0.991 & 0.605 & yes \\
clean-excl & 3 (cat) & 0/3 & 0.991 & 0.776 & no \\
leak       & 4       & 3/3 & 0.985 & 0.523 & yes \\
clean-excl & 4       & 2/3 & 0.985 & 0.632 & yes (FP) \\
\midrule
clean-novel & SVHN   & 0/3 & 1.000 & 0.992 & no \\
\midrule
\multicolumn{6}{l}{\textbf{Sensitivity $5/5$ leaks; far-OOD $1/1$; cheap no-retrain near-OOD contrast $2/5$.}}\\
\bottomrule
\end{tabular}
\caption{Fingerprint on the deployed $77$-d perturbation signature (ViT-B/16, CIFAR-10; mean
over seeds $\{42,123,456\}$; ``firings'' counts how many of the three seeds fired).
\oraclelda{} is $\approx0.99$ in every row (the signal is decodable throughout); leaks fire
on best-unsup and far-OOD does not, while the no-retrain clean-excl control false-fires only
on the three feature-peripheral classes, a decodable-but-not-detectable effect, not a leak.}
\label{tab:prs77}
\end{table}

\section{In-the-Wild Audit: Full Per-Pair Results}
\label{app:audit}
All $24$ standard OOD benchmark pairs (four backbones $\times$ the OOD sources other than the
backbone's own ID dataset; mean over three seeds $\{0,1,2\}$; all decisions seed-stable).
Fingerprint in penultimate-embedding space, ID-train-only fit, same rule
$\textsc{fired}=(\oraclelda\!\ge\!0.95)\wedge(\text{best-unsup}\!<\!0.65)$.

\begin{table}[htbp]
\centering\small
\begin{tabular}{@{}lllccc@{}}
\toprule
backbone & OOD source & proximity & OracleLDA & best-unsup & \textsc{fired} \\
\midrule
cifar100\_resnet50   & dtd           & far                    & 1.000 & 0.977 & no \\
cifar100\_resnet50   & fashionmnist  & far                    & 1.000 & 0.921 & no \\
cifar100\_resnet50   & mnist         & far                    & 1.000 & 0.983 & no \\
cifar100\_resnet50   & stl10         & far                    & 1.000 & 0.800 & no \\
cifar100\_resnet50   & svhn          & far                    & 1.000 & 0.883 & no \\
cifar100\_vit\_b16   & dtd           & far                    & 1.000 & 0.994 & no \\
cifar100\_vit\_b16   & fashionmnist  & far                    & 1.000 & 0.984 & no \\
cifar100\_vit\_b16   & mnist         & far                    & 1.000 & 0.753 & no \\
cifar100\_vit\_b16   & stl10         & far                    & 1.000 & 0.961 & no \\
cifar100\_vit\_b16   & svhn          & far                    & 1.000 & 0.947 & no \\
cifar10\_resnet50    & dtd           & far                    & 1.000 & 0.993 & no \\
cifar10\_resnet50    & fashionmnist  & far                    & 1.000 & 0.999 & no \\
cifar10\_resnet50    & mnist         & far                    & 1.000 & 1.000 & no \\
cifar10\_resnet50    & svhn          & far                    & 1.000 & 0.975 & no \\
cifar10\_vit\_b16    & dtd           & far                    & 1.000 & 1.000 & no \\
cifar10\_vit\_b16    & fashionmnist  & far                    & 1.000 & 0.992 & no \\
cifar10\_vit\_b16    & mnist         & far                    & 1.000 & 0.998 & no \\
cifar10\_vit\_b16    & svhn          & far                    & 1.000 & 0.998 & no \\
\midrule
cifar10\_resnet50    & stl10         & near (STL-10$\approx$C10) & 1.000 & 0.803 & no \\
cifar10\_vit\_b16    & stl10         & near (STL-10$\approx$C10) & 1.000 & 0.767 & no \\
cifar100\_resnet50   & cifar10       & near (cross-CIFAR)     & 0.953 & 0.635 & yes (FP) \\
cifar100\_vit\_b16   & cifar10       & near (cross-CIFAR)     & 0.977 & 0.919 & no \\
cifar10\_resnet50    & cifar100      & near (cross-CIFAR)     & 0.942 & 0.926 & no \\
cifar10\_vit\_b16    & cifar100      & near (cross-CIFAR)     & 0.971 & 0.970 & no \\
\bottomrule
\end{tabular}
\caption{In-the-wild audit. The fingerprint fires on exactly one of $24$ pairs, the hard
CIFAR-100$\to$CIFAR-10 cross pair (the documented threshold-scope false positive, no leak), and on no far-OOD pair. STL-10 (which shares $9/10$ CIFAR-10 classes) and the three other cross
directions stay outside the box, so standard cross-dataset construction does not introduce the
fit-set leak.}
\label{tab:audit_full}
\end{table}

\section{Head-to-Head with the Neighborhood Class-Leakage Index}
\label{app:headtohead}
We reimplement the concurrent k-NN class-leakage index of \citet{testingthetest},
$\ell_k(i)=\tfrac1k\sum_{j\in\mathcal N_k(i)}\mathbb 1[y_j\neq y_i]$,
under the binary ID/OOD labelling, so a designated-OOD point's leakage is the fraction of its
$k{=}50$ nearest neighbours (in the combined eval embedding) that are ID, and run it on the
same $52$-setting battery with known ground truth. As a leak-vs-clean detector it attains
AUROC $0.52$ (chance): mean leakage is $0.087$ on leaked settings versus $0.094$ on
clean ones, and its best single-threshold balanced accuracy is $0.51$. Our fingerprint attains
AUROC $0.995$ on the identical settings. The index is not defective: it faithfully measures
geometric overlap, the mechanism it was designed for in class-split anomaly detection. But our leak is a different mechanism: a fully separable trained class that merely sits
inside the one-class fit set. Such a class has low neighborhood leakage (its neighbours
are its own class), so an overlap index is blind to it, while the one-class detector scores it
as in-distribution and our fingerprint fires. The two diagnostics are complementary, not
competing, and a benchmark auditor benefits from both.

\section{Cross-Modal In-the-Wild Audit: Text and Documents}
\label{app:crossmodal}
The same embedding-space fingerprint applied to $12$ cached RoBERTa (text) and LayoutLMv3
(document) near/far-OOD settings (mean over five seeds; ID-train-only fit). All $11$ clean
settings are correctly not flagged ($0$ false positives across two further modalities). The
hard \texttt{20news/mac.hw} holdout (best-unsup $0.57$) is screened out by the oracle gate
($0.91<0.95$). The real Tobacco leak sits at the boundary in embedding space (best-unsup
$0.64$, $2/5$ seeds), and is caught decisively in its native $77$-d signature ($0.33$, Sec.~4).

\begin{table}[htbp]
\centering\small
\begin{tabular}{@{}lllccc@{}}
\toprule
setting & modality & kind & \oraclelda{} & best-unsup & \textsc{fired} \\
\midrule
\texttt{gate2\_tobacco\_near}        & document & \textbf{leak (real)}    & 0.997 & 0.639 & 2/5 seeds \\
\texttt{gate2b\_tobacco\_adve}       & document & clean (held-out)        & 0.993 & 0.976 & no \\
\texttt{leaveout\_Memo}              & document & clean (LOO)             & 0.988 & 0.944 & no \\
\texttt{leaveout\_Form}              & document & clean (LOO)             & 0.976 & 0.916 & no \\
\texttt{leaveout\_Resume}            & document & clean (LOO)             & 0.992 & 0.886 & no \\
\texttt{leaveout\_Scientific}        & document & clean (LOO)             & 0.948 & 0.864 & no \\
\texttt{gate1\_sst2$\to$20news}      & text     & clean (far)             & 1.000 & 0.935 & no \\
\texttt{sec\_sst2$\to$agnews}        & text     & clean (far)             & 1.000 & 0.945 & no \\
\texttt{sec\_sst2$\to$mnli}          & text     & clean (near)            & 0.985 & 0.902 & no \\
\texttt{leaveout20\_mac.hw}          & text     & clean (LOO, hard)       & 0.910 & 0.574 & no (gate) \\
\texttt{leaveout20\_guns}            & text     & clean (LOO)             & 0.918 & 0.804 & no \\
\texttt{leaveout20\_forsale}         & text     & clean (LOO)             & 0.939 & 0.704 & no \\
\bottomrule
\end{tabular}
\caption{Cross-modal audit. Zero false positives on the $11$ clean text/document settings.
\texttt{mac.hw} is a genuinely-hard novel class with low best-unsup ($0.57$) that the oracle
gate correctly screens out ($\oraclelda{}=0.91<0.95$), the supervised axis distinguishing a
hard novel class from a leak. The real Tobacco leak is borderline in embedding space and
decisive in its native signature.}
\label{tab:crossmodal}
\end{table}

\section{Reproducibility and Compute}
\label{app:repro}
All experiments use five fixed seeds $\{42,123,456,789,1024\}$; no statistic is fit
on OOD data (asserts $+$ a unit test); every inherited baseline reconciles to the
prior audit to $\Delta\!\le\!10^{-4}$; embeddings are content-addressed and cached
so scoring never re-runs the GPU. The pipeline is post-hoc; only the leave-class-out
studies train models ($20$ LayoutLMv3, eval-acc $0.93$--$0.98$; $15$ RoBERTa,
eval-acc $0.70$--$0.75$), reported with per-class, per-seed accuracy.
We fixed one bug: the perturbation rate $p$ initially had no effect; we traced
this to the extractor not forwarding $p$ (every run used the default). After the
fix, $p$ genuinely changes the masking, heads zeroed $7$ vs $57$ of $144$, $S1$
displacement $3.08$ vs $14.14$, and \prs-Maha moves from $0.880$ ($p{=}0.05$) to
$0.797$ ($p{=}0.4$). A clean $p$-sweep on all main settings remains future work.

Compute for each experiment follows.
\begin{itemize}
\item Controlled validation (Section~\ref{sec:validate}, Appendix~\ref{app:validate}):
ImageNet-pretrained ResNet-50 and ViT-B/16, fine-tuned $3$ epochs per base dataset on CIFAR-10/100, the full $2\times2$ of four backbones (train acc $0.92$--$0.995$, batch $256$
on a single A100-80GB under AMP); three seeds $\{0,1,2\}$ vary the supervised probe split;
penultimate features are cached and detectors run on CPU; the supervised probe is shrinkage LDA.
\item In-the-wild audit (Section~\ref{sec:audit}, Appendix~\ref{app:audit}): the same four
backbones scored against SVHN, DTD, MNIST, FashionMNIST, and STL-10 (grayscale inputs mapped
to three channels), reusing the identical fit/extract/score code; ID-train-only fit; three
seeds.
\item Head-to-head (Appendix~\ref{app:headtohead}) and cross-modal audit
(Appendix~\ref{app:crossmodal}): the neighborhood-leakage index is computed on the same battery
features; the cross-modal audit reuses the cached v2 RoBERTa/LayoutLMv3 penultimate embeddings
(five seeds), all on CPU, no GPU re-run.
\item $77$-d signature spot-check (Appendix~\ref{app:prs77}): reuses the v2 PRS ViT-B/16
extractor, $S4$ signature, and one-class scorers verbatim on the already-cached gate-3
CIFAR-10 signatures (three seeds $\{42,123,456\}$, kNN $k{=}50$); no GPU re-run.
\end{itemize}
All
configs, logs, per-seed numbers, the content-addressed cache, trained checkpoints, and the
figure-generation scripts are released with the code.

\vskip 0.2in
\bibliography{references}

\end{document}